\DeclareMathOperator*{\argmin}{arg\,min}
\newcommand{\delete}[1]{{\bgroup\markoverwith{\textcolor{red}{\rule[0.5ex]{2pt}{0.4pt}}}\ULon{#1}}}
\newcommand{\deletefig}[1]{{\bgroup\markoverwith{\textcolor{red}{\rule[2.5ex]{2pt}{2.0pt}}}\ULon{#1}}}
\newtheorem{theorem}{Theorem}
\newtheorem{problem}{Problem}[section]
\title{\LARGE \bf Bearing-based Relative Localization for \\
Robotic Swarm with Partially Mutual Observations}
\begin{document}
	\author{Yingjian Wang, Xiangyong Wen, Yanjun Cao, Chao Xu, Fei Gao
		\thanks{All authors are with the State Key Laboratory of Industrial Control Technology, Zhejiang University, Hangzhou, 310027, China, and the Huzhou Institute of Zhejiang University, Huzhou, 313000, China. Email: \{yj\_wang, fgaoaa\}@zju.edu.cn}
		\thanks{Corresponding author: Fei Gao}}
	\maketitle
	\begin{abstract}
			Mutual localization provides a consensus of reference frame as an essential basis for cooperation in multi-robot systems. Previous works have developed certifiable and robust solvers for relative transformation estimation between each pair of robots. However, recovering relative poses for robotic swarm with partially mutual observations is still an unexploited problem. In this paper, we present a complete algorithm for it with optimality, scalability and robustness. Firstly, we fuse all odometry and bearing measurements in a unified minimization problem among the Stiefel manifold. Furthermore, we relax the original non-convex problem into a semi-definite programming (SDP) problem with a strict tightness guarantee. Then, to hold the exactness in noised cases, we add a convex (linear) rank cost and apply a convex iteration algorithm. We compare our approach with local optimization methods on extensive simulations with different robot amounts under various noise levels to show our global optimality and scalability advantage. Finally, we conduct real-world experiments to show the practicality and robustness.
	\end{abstract} 
	\vspace{-0.4cm}

	\section{Introduction}
	\label{sec:Introduction} 
	
	Recently, robotic swarms have emerged as an upgrading system to single robots since they can be more efficient and fault-tolerant in various complex missions, including cooperative exploration \cite{gao2021meeting}, package delivery \cite{dorling2016vehicle} and surveillance \cite{pasqualetti2012cooperative}. 
In these collaborative tasks, sharing a reference frame among robots is fundamental and essential. 
However, this requirement is hard to meet in GPS-denied environments such as underground caves or indoor rooms. 
Thus, to extend the applicable range of a swarm system, robots need to reach a consensus of coordination using only onboard sensors.
	
	Relative pose estimation, which seek to recover relative transformations between robots, is a fundamental problem in multi-robot systems. 
Bearing-based mutual localization \cite{martinelli2005multi, zhou2012determining, franchi2009mutual, dhiman2013mutual,nguyen2020vision, jang2021multirobot, 9827567} only uses 2D visual detection of robots. 
Compared with map-based localization \cite{schmuck2019ccm, forster2013collaborative, cunningham2013ddf, cieslewski2018data}, it is less influenced by the environment and needs less bandwidth. 
Despite its appeal, there are still some challenges when applying it in real-world multi-robot systems. 
Firstly, as shown in Fig.\ref{fig:head}, due to the limited field-of-view (FOV) and unavoidable occlusion, most robots can only observe a part of other robots, forming a \emph{partial observation graph}. 
How to leverage all measurements in a swarm system with partially mutual observations to recover all robots' poses is still an unexploited problem. 
Moreover, traditional local optimization-based methods \cite{nguyen2020vision, jang2021multirobot}, may potentially fall into local minimum due to the extreme non-convexity of the complicated problem formulation, which is common in relative localization. And they will lead to erroneous solutions, undermining  multi-robot cooperation.

	\begin{figure}[t]
		\centering
		\includegraphics[width=0.5\textwidth]{./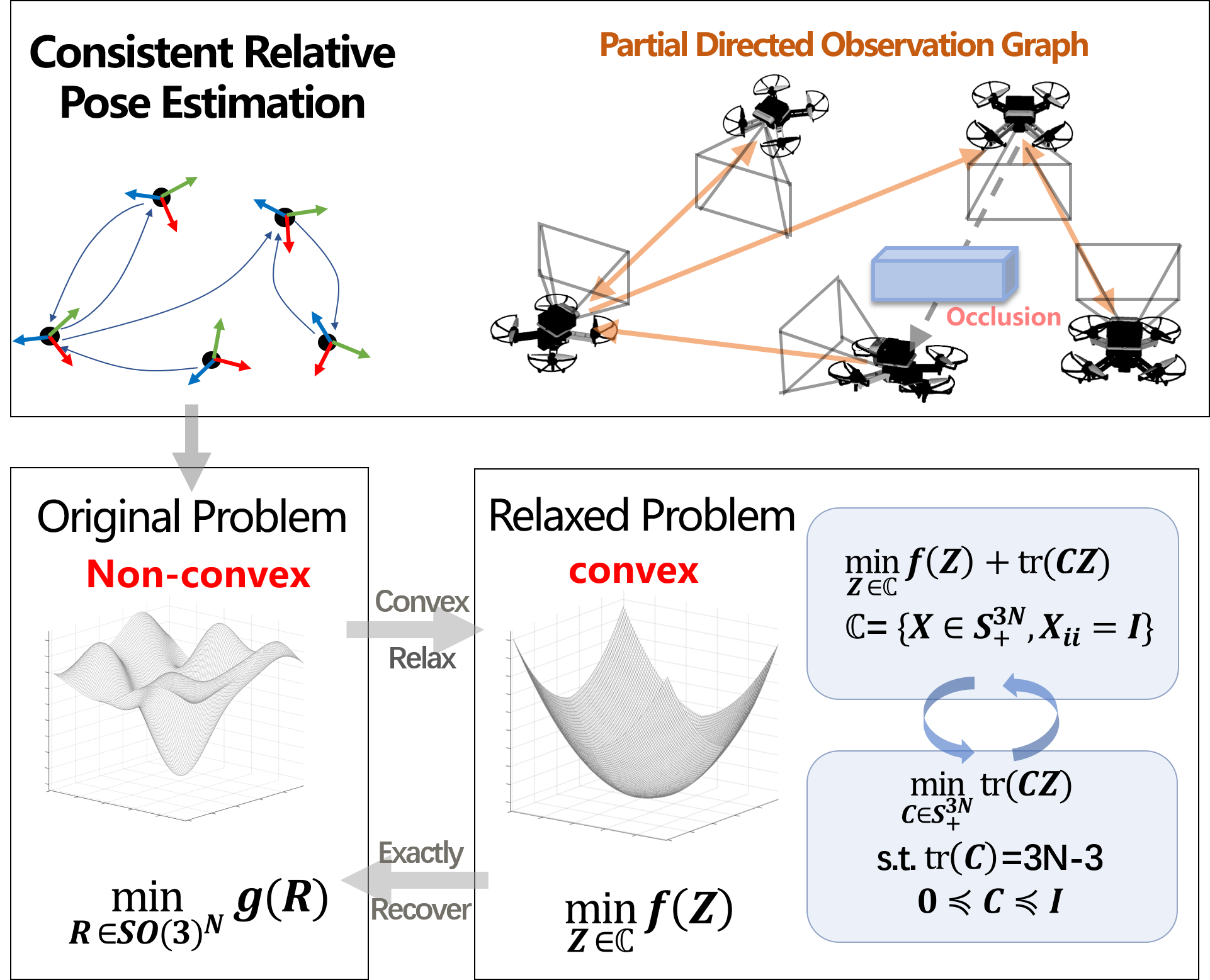}
		\caption{\label{fig:head} An overview of our proposed algorithm in this paper. Based on observation graph build with pose vertices (black dot) and directed edge (blue curves with arrows), our method obtain the global optimal relative pose by convex relaxation, iterative optimization and exact recovery.}
		\vspace{-0.6cm}
	\end{figure}

	We aim to provide a complete relative localization method for a multi-robot system sharing partial mutual observations, which are common for FoV constrained swarm in practice. 
To the best of our knowledge, there is no previous work soving this problem decently. 
In this paper, we focus on a directed graph, which is formed by partial bearing obsercations. 
Firstly we derive two feasible problem formulations based on different variable marginalization methods, resulting in a unified minimization problem among Stiefel manifold \cite{boumal2015riemannian}. 
We also present detailed analysis and comparison between these formulations.
Furthermore, we manage to relax the original non-convex problem into a SDP problem, which is convex and can be minimized globally. 
In addition, we provide a sufficient condition under which we strictly guarantee the tightness of the relaxation in noise-free cases. 
Finally, to prevent the solution's unexactness caused by noises in practice, we borrow the idea of rank-constrained optimization.
More precisely, we apply a fast, minimal-rank-promoting algorithm to our relaxed problem and develop a complete iterative method, as shown in Fig.\ref{fig:head}. 
Extensive experiments on synthetic and real-world datasets show our method's optimality, effectiveness and robustness under different noise levels.
	
	Summarize our contributions in this paper:
	
	\begin{enumerate}
		\item We provide novel and unified formulations for a partially mutual observed multi-robot sysem, to jointly optimize poses of all robots.
		\item We propose a method to relax the original non-convex problem into a SDP problem among a convex set and provide its tightness analysis in noise-free cases.
		\item We combine the rank-constrained convex iteration algorithm with our relaxation method to guarantee its exactness under different noise levels. 
		\item We conduct sufficient simulation and real-world experiments to validate the optimality, practicality and robustness of our proposed method.
		\item We release the implementation of our method in MATLAB and C++ for the reference of our community \footnote{https://github.com/ZJU-FAST-Lab/CertifiableMutualLocalization}.
	\end{enumerate}
	
	\section{Related Work}
	\label{sec:RelatedWork} 
	\subsection{Relative Pose Estimation}
		
		There are mainly two approaches in multi-robot relative pose estimation: map-based method using the loop-closing module and mutual localization using inter-robot observations. Map-based methods, including centralized \cite{schmuck2019ccm, forster2013collaborative} and decentralized \cite{cunningham2013ddf, cieslewski2018data} architectures, typically begin with interloop detection, extract features on common view areas for descriptor matching, and finally construct geometry constraint to recover relative poses. However, this method requires significant bandwidth for feature sharing and has poor performance in environments with less texture or with many similar scenes.
		
		In contrast, mutual localization, which employs robot-to-robot range and bearing measurements for relative pose estimation, relies less on the environment. Martinelli \cite{martinelli2005multi} use the extended Kalman Filter (EKF) as nonlinear estimator for relative localization to fuse mutual measurements. In addition, Zhou \cite{zhou2012determining} provides algebraic and numerical solution methods for any combination of range and bearing measurements. However, since only the minimum number of measurements are considered, it suffers from degeneration under noise.
		
		Bearing measurements are important relative observation resources. The mutual localization using bearing measurements is addressed in  \cite{franchi2009mutual}, which resolves the anonymity of observations with particle filters (PF). Dhiman \cite{dhiman2013mutual} localizes cameras using reciprocal fiducial observations to obviate the assumption of sensor ego motion. In \cite{nguyen2020vision}, the coupled probabilistic data association filter is adopted in mutual localization to reject the false positives/negatives from the vision sensor and IMU. In addition, Jang \cite{jang2021multirobot} proposes an alternating minimization algorithm to optimize relative transformations and scales of local maps in multi-robot monocular SLAM using bearing measurements. These local optimization methods rely on good initial values to obtain a suitable solution. Our previous work \cite{9827567} formulates a mixed-integer programming problem to recover optimal relative poses and data associations jointly with an optimality guarantee. 
		However, all the above works realize mutual localization by relative pose estimation between each pair of robots. In contrast, we directly fuse all team robots' observations to formulate a unified problem and jointly estimate all robots' relative pose, which is meaningful for FOV-limited swarms in environments with obstacles. And it's exactly our focus in this paper.
		
	\subsection{Convex Relaxation in Robotics}
	
	Thanks to advanced optimization theory, a series of solvers for non-convex and NP-hard problems have been developed in computer vision and robotics in the past few years. In \cite{boumal2015riemannian}, Boumal provides a detailed analysis of semidefinite relaxation (SDR), a basic yet powerful convex relaxation method, and proposes the Riemannian staircase algorithm to optimize problems among Stiefel manifold efficiently. 
	Based on the Riemannian staircase algorithm, SE-Sync \cite{rosen2019se} and Cartan-Sync \cite{briales2017cartan} can recover the certifiably optimal solution of pose graph optimization under acceptable noise. 
	In \cite{yang2020teaser}, point registration with outlier is formulated as a quadratically constrained quadratic problem (QCQP) by binary cloning, relaxed by SDR, and globally optimized. 
	Zhao \cite{zhao2020efficient} proposes an efficient SDR-based method for essential matrix estimation, one of the most classical problems in computer vision. 
	Cifuentes \cite{cifuentes2022local} proves the tightness and robustness of SDR  under the assumption of low noise based on local stability theory. 
	In our paper, we also utilize SDR to relax the original non-convex problem and borrow the idea of convex iteration in \cite{dattorro2005convex} to hold exactness under noise. The method of convex iteration is also applied in \cite{giamou2022convex} to develop a distance-geometric inverse kinematic solver. 
	
	\section{Problem Formulation}
	\label{sec:problem_formulation} 
	In this section, we begin with a summary of our notation. Then we provide two problem formulations for relative localization with different approaches to variable elimination. We also provide some analysis of two different formulations.
	
	\subsection{Notation}
	Boldface lower and upper case letters (e.g. x and $\mathbf{M}$) represent vectors and matrices respectivly. Matrices $\mathbf{M} \in \mathbb{R}^{dm \times dm}$ are thought of as block matrices with blocks of size $d \times d$. Subscript indexing such as $\mathbf{M}_{ij}$ refers to the block on the $i$th row and $j$th column of blocks, $1\leq i,j\leq m$. The Kronecker product is writen as $\otimes$. The pseudoinverse and trace of the matrix $\mathbf{M}$ are denoted as $\mathbf{M}^{\dagger}$ and $\text{tr}(\mathbf{M})$ respectively. vec($\cdot$) vectorizes a matrix by stacking its columns on top of each other and $\text{vec}^{-1}(\cdot)$ denotes its inverse. We write $\mathbf{I_n}$ and $\mathbf{0_n}$ (or $\mathbf{I}$ and $\mathbf{0}$ when clear from context) for the $n \times n$ identity matrix and zero matrix respectively. We also write $e_i = [0,\cdots,1,\cdots,0]^T$ as the $i$-th canonical basis vector for the $n$-dimensional space. The space of $n \times n$ symmetric and symmetric positive semi-definite (PSD) matrix are denoted $\mathbb{S}^{n}$ and $\mathbb{S}_+^{n}$, and we write $\mathbf{A} \succeq \mathbf{B}$ to indicate that $\mathbf{A}-\mathbf{B}$ is PSD. 
	
	As the Fig.\ref{fig:head} shown, we model $N$ robots as an directed graph $\mathbf{\mathcal{G}} = (\mathbf{\mathcal{V}}, \mathbf{\mathcal{E}})$, where $\mathcal{V} := \{1,2,...,N\}$ is the set of vertices, and $\mathcal{E} \subset \mathcal{V} \times \mathcal{V}$ is the set of edges. In graph $\mathcal{G}$, the vertex $i$ represent the $i^{th}$ robot with pose in the world $\{\mathbf{R_i}, t_i\}$ and the directed edge $e_{ij} \in \mathcal{E}$ means the robot $i$ can obtain a series of robot $j$'s bearing measurements  $b_{ij}=\{b_1,b_2,...,b_m\}$, where $b_k\in \mathbb{R}^3$ is a unit vector and $m$ is the measurements amount. If there is no observation between one pair of robots in a team, the graph $\mathbf{\mathcal{G}}$ is a partially connected digraph. In the following subsections, we will utilize data from $\mathbf{\mathcal{E}}$ to formulation problems.

	\subsection{Formulation Using Cross Product}
	\label{subsec:formulation1}
	\begin{figure}[t]
		\centering
		\includegraphics[width=0.4\textwidth]{./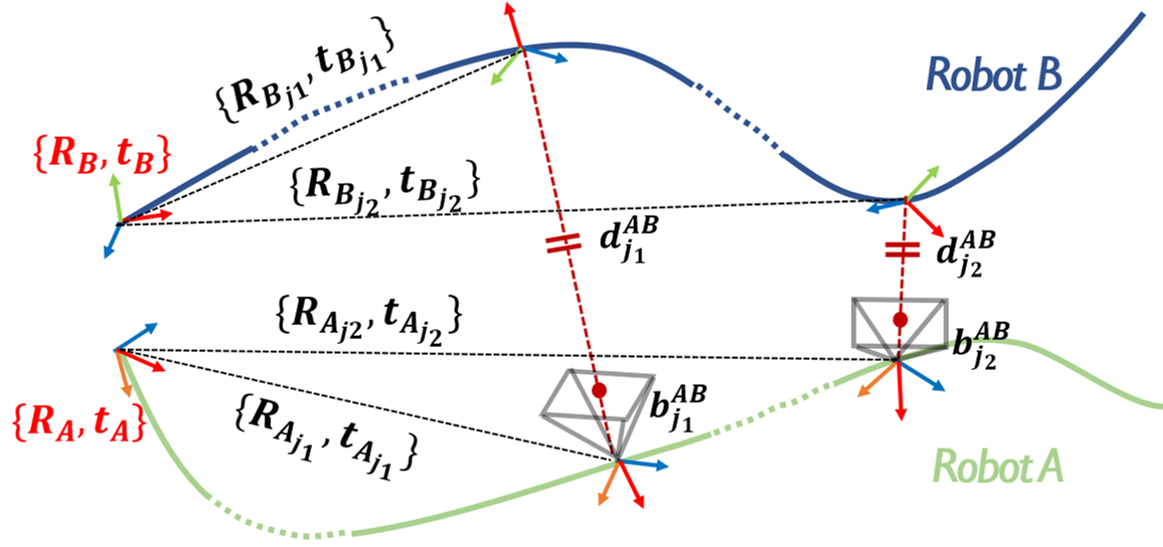}
		\caption{\label{fig:method} Demonstration of the pose in the world $\{\mathbf{R_X}, t_X\}$ and local odometry $\{\mathbf{R_{X_t}},t_{X_t}\}$ of the robot $X$, $X\in\{A,B\}$, and the bearing observations $b^{AB}_t$ and the distance $d^{AB}_t$  between robots at time $t$.}
		\vspace{-0.6cm}
	\end{figure}

	For simplicity, we start with the edge $e_{AB} \in \mathbf{\mathcal{E}}$ between two robots, the observer robot $A$ and the observed robot $B$. As shown in Fig.\ref{fig:method}, consider two timestamps $j_1$ and $j_2$, we have equations as follows:
	\begin{align}
		&\mathbf{R_A} (\mathbf{R_{A_{j_1}}} b^{AB}_{j_1} d^{AB}_{j_1} + t_{A_{j_1}}) + t_{A} = \mathbf{R_B} t_{B_{j_1}} + t_{B}, \\
		&\mathbf{R_A} (\mathbf{R_{A_{j_2}}} b^{AB}_{j_2} d^{AB}_{j_2} + t_{A_{j_2}}) + t_{A} = \mathbf{R_B} t_{B_{j_2}} + t_{B},
	\end{align}
	where $\{\mathbf{R_{X_t}},t_{X_t}\}, b^{AB}_t$ and $d^{AB}_t$ are the local odometry of the robot $X$, the bearing observation and the distance between robots at time $t$ respectively, $X\in\{A,B\}$. Then we eliminate $t_A$ and $t_B$ by subtraction between above equations and obtain 
	\begin{gather}
		\label{equ:imp}
		\mathbf{R_A}  (g_{j_2} d^{AB}_{j_2} -g_{j_1} d^{AB}_{j_1} + t_{A_{j_1j_2}})= \mathbf{R_B}  t_{B_{j_1j_2}}
	\end{gather}
	where $g_t = \mathbf{R_{A}} b^{AB}_t$ for $t \in \{j_1, j_2\}$ and $t_{X_{j_1j_2}} = t_{X_{j_2}} - t_{X_{j_1}}$ for $X \in \{A, B\}$. This step aims to eliminate unconstrained variables (translations) and to formulate a rotation-only problem later, which is a common way to restrict all involved variables in the special orthogonal group ($SO(3)$) \cite{rosen2019se, 9827567}. Then we denote $k_{j_1j_2} = g_{j_1} \times g_{j_2}$ and derive an error
	\begin{gather}
		\label{equ:error}
			e^{AB}_{j_1j_2} = k_{j_1j_2}^T(\mathbf{R_A}^T \mathbf{R_B} t_{B_{j_1j_2}} -t_{A_{j_1j_2}}).
	\end{gather}
	We denote $\mathbf{R}:= [\mathbf{R_1}, \mathbf{R_2},..., \mathbf{R_N}] \in SO(3)^N$ as our decision variable and construct the choose matrix $\mathbf{C_X} = e_X \otimes I_3$ for robot $X$. After substituting $\mathbf{R_X} = \mathbf{R} \mathbf{C_X}$ into Equ. (\ref{equ:error}), we obtain the error involving only variable $\mathbf{R}$:
	\begin{gather}
		\label{error:1}
			e^{AB}_{j_1j_2} = k_{j_1j_2}^T(\mathbf{C_A^T R^T R C_B} t_{B_{j_1j_2}} -t_{A_{j_1j_2}}).
	\end{gather}
	Finally, we collect all measurements, including each robot's local odometry and bearing observations among a set of timestamps $J$, to formulate a least-square problem:
	
	\begin{problem}[Rotation-Only Least-Square Problem]
		\label{pro:rols}
		\begin{equation}
			\begin{split}
				R^* =&\underset{\mathbf{R} \in SO(3)^N}{\arg\min}\ \sum_{e_{XY} \in \mathcal{E} \atop \{j_1,j_2\}\in J} (e^{XY}_{j_1j_2})^T  e^{XY}_{j_1j_2}  \\
				=&\underset{\mathbf{R} \in SO(3)^N}{\arg\min}\ \sum_{e_{XY} \in \mathcal{E}  \atop \{j_1,j_2\}\in J} 
				\begin{aligned} 
					&(\mathbf{C_X^T R^T R C_Y} t_{Y_{j_1j_2}} -t_{X_{j_1j_2}})^T k_{j_1j_2}\\
					&k_{j_1j_2}^T(\mathbf{C_X^T R^T R C_Y} t_{Y_{j_1j_2}} -t_{X_{j_1j_2}})
				\end{aligned} \\
				=&\underset{\mathbf{R} \in SO(3)^N}{\arg\min}\ \sum_{e_{XY} \in \mathcal{E}  \atop \{j_1,j_2\}\in J} 			
				\begin{aligned} 
					& \textup{tr}(\mathbf{A^{XY}_{j_1j_2} R^TR B^{XY}_{j_1j_2} R^TR}) \\
					& +\textup{tr}(\mathbf{C^{XY}_{j_1j_2} R^TR}) + D^{XY}_{j_1j_2} \\
				\end{aligned}  \\
				&\ \scriptsize \color{gray}(\text{$\textup{tr}(\mathbf{AXBX}) = \textup{vec}(\mathbf{X})^T (\mathbf{B} \otimes \mathbf{A}) \textup{vec}(\mathbf{X})$}) \\
				=&\underset{\mathbf{R} \in SO(3)^N}{\arg\min} \textup{vec}(\mathbf{R^TR})^T \mathbf{H}  \textup{vec}(\mathbf{R^TR}) + \textup{tr}(\mathbf{J R^TR}) + K, \nonumber
			\end{split}
		\end{equation}
	\end{problem}
where 
	\begin{align}
		& \mathbf{A^{XY}_{j_1j_2}} = \mathbf{C_Y} t_{Y_{j_1j_2}} t_{Y_{j_1j_2}}^T \mathbf{C_Y^T} \succeq 0, \\
		& \mathbf{B^{XY}_{j_1j_2}} = \mathbf{C_X} k_{j_1j_2} k_{j_1j_2}^T \mathbf{C_X^T} \succeq 0, \\
		& \mathbf{C^{XY}_{j_1j_2}} = -2 \mathbf{C_Y} t_{Y_{j_1j_2}} t_{X_{j_1j_2}}^T k_{j_1j_2} k_{j_1j_2}^T \mathbf{C_X^T} , \\
		& D^{XY}_{j_1j_2} = t_{X_{j_1j_2}}^T k_{j_1j_2} k_{j_1j_2}^T t_{X_{j_1j_2}}, \\
		& \mathbf{H} = \sum_{e_{XY} \in \mathcal{E} \atop \{j_1,j_2\}\in J}
		\mathbf{B^{XY}_{j_1j_2}} \otimes  \mathbf{A^{XY}_{j_1j_2}} \ \in \mathbb{R}^{9N^2 \times 9N^2}, \\
		& \mathbf{J} = \sum_{e_{XY} \in \mathcal{E} \atop \{j_1,j_2\}\in J} \mathbf{C^{XY}_{j_1j_2}},  \qquad
		K = \sum_{e_{XY} \in \mathcal{E} \atop \{j_1,j_2\}\in J} D^{XY}_{j_1j_2}.  
	\end{align}

	Note that  $A^{XY}_{j_1j_2}$ and $B^{XY}_{j_1j_2}$ are PSD since they are Gram matrices. Then, as that $\mathbf{A}  \otimes \mathbf{B}  \succeq 0$ if $\mathbf{A}  \succeq 0$ and $\mathbf{B}  \succeq 0$, we obtain that $\mathbf{H}  \succeq 0$.
	
	\subsection{Problem Formulation Using Schur Complement}
	\label{subsec:formulation2}
	In this subsection, we provide another problem formulation, which takes relative rotations and distances between robots as variables and marginalizes distances using Schur Complement. More specifically, we first derive an error according to Equ. (\ref{equ:imp}) via decision variable $\mathbf{R}$ and choose matrix $\mathbf{C_i}$ as follows:  
	\begin{gather}
		\label{error:2}
		e_{j_1j_2}^{AB} = g_{j_2} d^{AB}_{j_2} -g_{j_1} d^{AB}_{j_1} + t_{A_{j_1j_2}} - \mathbf{C_A^T R^T R C_B} t_{B_{j_1j_2}}. 
	\end{gather}
	Next we define some variables:
	\begin{align}
		d^{XY} &\triangleq \text{vstack}(\{d^{XY}_{j}\}_{j \in J}), \\
		d      &\triangleq \text{vstack}(\{d^{XY}\}_{e_{XY} \in \mathcal{E}}), \\
		x 	   &\triangleq [\text{vec}(R^TR)^T,\  y,\ d^T]^T
	\end{align}
	with auxiliary constraint $y^2=1$, where the notation $\text{vstack}(G)$ stacks all variable in $G$ vertically. Using above variables, we rewrite Equ.\ref{error:2} in linear form $e^{AB}_{j_1j_2} = \mathbf{M^{AB}_{j_1j_2}} x$, where 
	\begin{gather}
		\mathbf{M^{AB}_{j_1j_2}} \triangleq [\mathbf{M_1}, M_2, \mathbf{M_3}], \\
		\mathbf{M_1} = \mathbf{C_A^T} ((\mathbf{C_B} t_{B_{j_1j_2}})^T \otimes I ), \quad
		M_2 = t_{A_{j_1j_2}}, \\
		\mathbf{M_3} = [0 \cdots -g_{j_1} \cdots g_{j_2} \cdots 0].
	\end{gather}
	Two non-zero elements of $M_3$ are columns corresponding to the distance variables between two robots at time $j_1$ and $j_2$.
	
	Next, we use all errors among $\mathcal{G}$ to formulate a least-square problem involving rotation and distance variables:
	\begin{problem}[Non-Marginalized Problem]
		\label{pro:origin}
		\begin{equation}
			\begin{split}
				x^* &=\underset{x}{\arg\min}\ \sum_{e_{XY} \in \mathcal{E} \atop \{j_1,j_2\}\in J} (e^{XY}_{j_1j_2})^T  e^{XY}_{j_1j_2}  \\
				&=\underset{x}{\arg\min}\ \sum_{e_{XY} \in \mathcal{E} \atop \{j_1,j_2\}\in J}x^T \mathbf{M^{XY}_{j_1j_2}}^T \mathbf{M^{XY}_{j_1j_2}} x \\
				&=\underset{x}{\arg\min}\ x^T \underbrace{(\sum_{e_{XY} \in \mathcal{E} \atop \{j_1,j_2\}\in J} \mathbf{M^{XY}_{j_1j_2}}^T \mathbf{M^{XY}_{j_1j_2}})}_{:=\mathbf{Q}} x \\
				&\textup{s.t} \quad  y^2=1, \quad \mathbf{R_i} \in SO(3) \quad \forall i \in [1,N]. \notag
			\end{split}
		\end{equation}
	\end{problem}
	
	To marginalize distance variables $d$ which is constraint-less, we follow the same procedure in \cite{9827567} using Schur Complement. We write $\mathbf{Q}$ as follows:
	\begin{gather}
		\mathbf{Q} = 
		\begin{bmatrix}
			\mathbf{Q_{\bar{\mathcal{D}},\bar{\mathcal{D}}}} & \mathbf{Q_{\bar{\mathcal{D}},\mathcal{D}}} \\
			\mathbf{Q_{\mathcal{D},\bar{\mathcal{D}}}} & \mathbf{Q_{\mathcal{D},\mathcal{D}}}
		 \end{bmatrix},
	\end{gather}
	where the subindex $\mathcal{D}$ stands for the set of indexes corresponding to distance variables or not (subindex $\bar{\mathcal{D}}$). Furthermore, we eliminate  $d$ to obtain the following problem:
	
	\begin{problem}[Marginalized Problem]
		\label{pro:marg}
		\begin{gather}
			z^* =\underset{z}{\arg\min}\ z^T \mathbf{\bar{Q}} z \notag \\
			\textup{s.t} \  y^2=1, \ \mathbf{R_i} \in SO(3) \quad \forall i \in [1,N]. \notag
		\end{gather}
	\end{problem}
	where $z = [\text{vec}(\mathbf{R^TR})^T,\ y]^T$ and $\mathbf{\bar{Q}} = \mathbf{Q} / \mathbf{Q_{\mathcal{D},\mathcal{D}}} = \mathbf{Q_{\bar{\mathcal{D}},\bar{\mathcal{D}}}} - \mathbf{Q_{\bar{\mathcal{D}},\mathcal{D}}} \mathbf{Q_{\mathcal{D},\mathcal{D}}^{\dagger}} \mathbf{Q_{\mathcal{D},\bar{\mathcal{D}}}} \in \mathbb{R}^{(9N^2+1) \times (9N^2+1)}$. We take $y = 1$ and rewrite Problem \ref{pro:marg} as a rotation-only problem:
	\begin{problem}[Rotation-Only Marginalized Problem]
		\label{pro:romp}
			\begin{gather}
				\mathbf{R}^*=\underset{\mathbf{R} \in SO(3)^N}{\arg\min} \textup{vec}(\mathbf{R^TR})^T \mathbf{U} \textup{vec}(\mathbf{R^TR}) + \textup{tr}(\mathbf{V} \mathbf{R^TR}) + W,  \notag
			\end{gather}
	\end{problem}
	where
	\begin{gather}
		\mathbf{U} = \textup{Block}(\mathbf{\bar{Q}}, 9N^2, 9N^2, 1, 1), \\
		\mathbf{V} = 2\textup{vec}^{-1}(\textup{Block}(\mathbf{\bar{Q}},9N^2, 1, 1, 9N^2+1)),  \\
		W = \textup{Block}(\mathbf{\bar{Q}}, 1, 1, 9N^2+1, 9N^2+1). 
	\end{gather}
	Block$(\mathbf{X}, c, r, x, y)$ denotes ($c \times r$)-block of $\mathbf{X}$ from the index ($x,y$).
	Note that if Gram matrix $\mathbf{Q} \succeq 0$, then $\mathbf{\bar{Q}}$ is still PSD after Schur Complement, as well as its $(9N^2)$th leading principal submatrix $\mathbf{U}$.
	
	\subsection{Analysis}
	We conclude the constructed problem with two different formulations as a unified problem:
	\begin{problem}[Unified Problem]
		\label{pro:up}
			\begin{gather}
				\underset{\mathbf{R} \in SO(3)^N}{\min} g(\mathbf{R}) = \textup{vec}(\mathbf{R^TR})^T \mathbf{A} \textup{vec}(\mathbf{R^TR}) + \textup{tr}(\mathbf{B R^TR}) + C. \notag
			\end{gather}
	\end{problem}
	Firstly, note that whatever formulation we choose, only rotation-related variables are left, and the number of them is solely related to $N$. In contrast, the variable amount in local optimization methods \cite{jang2021multirobot} is always related to the measurement amount $k$. Compared with them, our method needs less computation and memory. 
	
	Then we study different error modeling behind two formulations. As the Fig.\ref{fig:error} shown, we firstly transform $t_{B_{j_1j_2}}$ to robot $A$'s coordinate frame by the truth (latent) relative rotation $\mathbf{\tilde{R}_{AB}}$ and introduce $\Delta$ as the observation error due to noise in visual detection. That means implicitly, we have
	\begin{gather}
		\mathbf{\tilde{R}_{AB}} t_{B_{j_1j_2}} + d^{AB}_{j_1} b^{AB}_{j_1} + \Delta =t_{A_{j_1j_2}} + d^{AB}_{j_2} b^{AB}_{j_2}. 
	\end{gather}
	Thus, it is apparent that we aim to minimize $(k_{j_1j_2}^T \Delta)^2$  and $\Vert \Delta \Vert$ in the first and second formulations, respectively.
	
	\begin{figure}[t]
		\centering
		\includegraphics[width=0.5\textwidth]{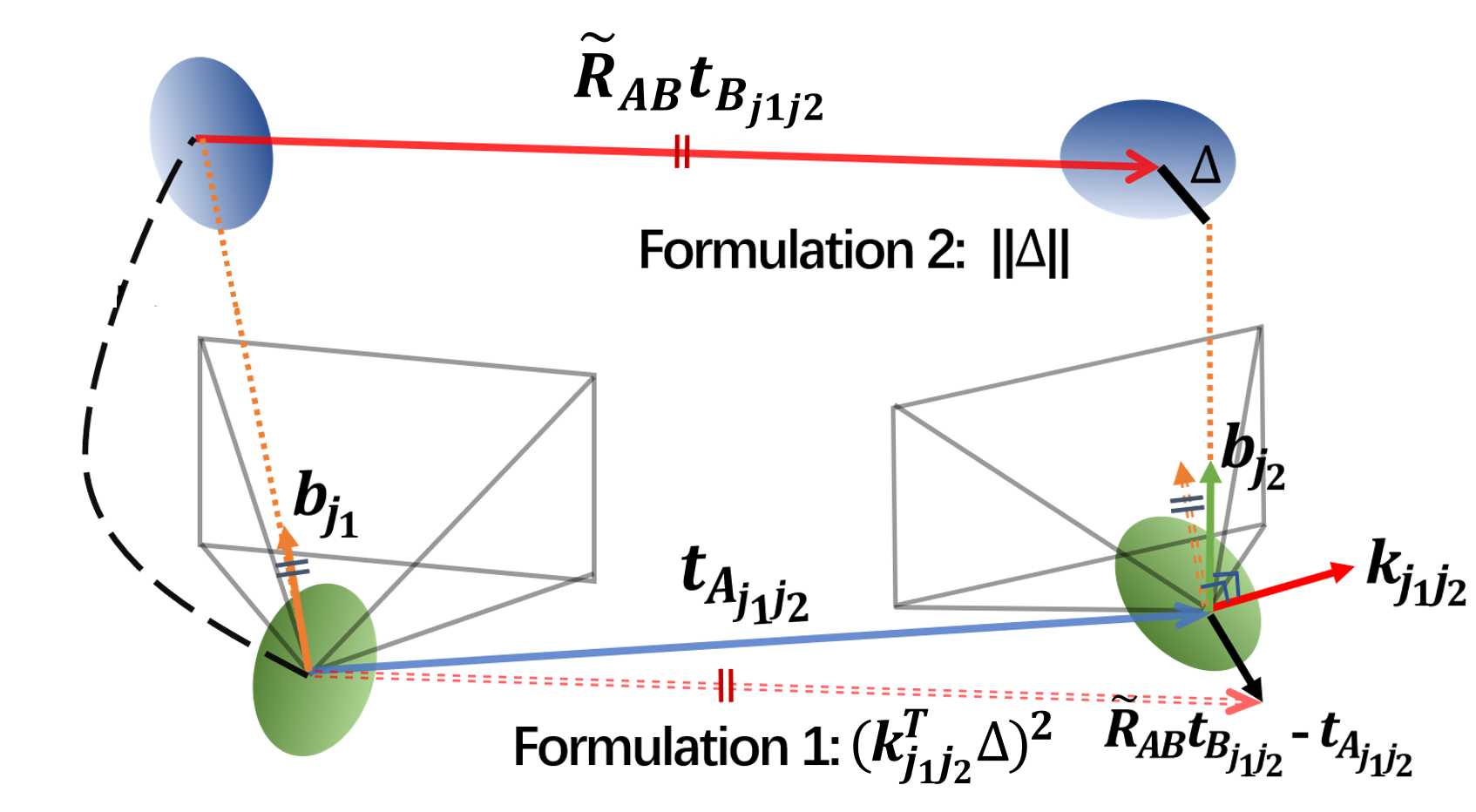}
		\caption{\label{fig:error} Demonstration of different error modeling in two formulations.}
		\vspace{-0.3cm}
	\end{figure}

	Although there are no apparent differences in error modeling, we recommend the first formulation. That is because the second method needs not only large-scale matrix multiplication (the scale depends on the measurement amount) but also dense pseudoinverse calculation in Schur Compliment, leading to more memory usage than another formulation. Besides, the sparsity of $\mathbf{H}$ in the first formulation can be utilized to accelerate optimization significantly.
	
	Then, we analyze the convexity of the formulated problem. $g(\mathbf{R})$ is a four-term non-convex function among $SO(3)^N$, which renders Problem \ref{pro:up} tough to be minimized globally. The local optimization-based method \cite{jang2021multirobot} typically seeks good initial guesses by introducing more or fewer assumptions when facing non-convex problems. In contrast, in the next section, we aim to relax Problem \ref{pro:up}  into a convex problem and recover the globally optimal solution exactly.

	\section{Convex Iterative for Consistent Relative Pose Estimation}
	In this section, we will apply semidefinite relaxation
	to Problem \ref{pro:up}  and obtain the optimal solution in Sec.\ref{subsec:SDR}. Then, we provide a condition under which tightness of relaxation can be guaranteed in noise-free cases with strict proof in Sec.\ref{subsec:tightness}. Lastly, to ensure exactness in noised-cases, we introduce rank cost and iterative algorithm to our relaxed problem in Sec.\ref{subsec:convex}.
	
	\subsection{Semidefinite Relaxation and Lagrangian Dual}
	\label{subsec:SDR}
	To relax Problem \ref{pro:up}, we first consider introducing a new matrix variable $\mathbf{Z} \succeq 0$ to replace the original variable $\mathbf{R^TR}$. Then we drop the non-convex constraint $\textup{rank}(\mathbf{Z}) = 3$ to obtain
 	\begin{problem}[Relaxed Problem]
 		\label{pro:rp}
 		\begin{equation}
 			\begin{aligned}
 				\underset{\mathbf{Z} \in \mathcal{C}}{\min} f(\mathbf{Z}) = \textup{vec}(\mathbf{Z})^T \mathbf{A} \textup{vec}(\mathbf{Z}) + \textup{tr}(\mathbf{B} \mathbf{Z}) + C, \\
 				\mathcal{C} = \{ \mathbf{X} \in \mathbb{S}_{+}^{3N}: \mathbf{X} \succeq 0, \mathbf{X_{ii}} = \mathbf{I_3} ,i \in [1,N]\} \notag
 			\end{aligned}
 		\end{equation}
 	\end{problem}
 	The Lagrangian corresponding to 
 	Note that $\mathcal{C}$ is a compact convex set and $f(\mathbf{Z})$ is a quadratic function after variable substitution. Thus if $\mathbf{A} \succeq 0$, the Problem \ref{pro:rp} is a convex SDP problem, which can be  solved globally by off-shelf SDP solvers using interior point method. The Lagrangian dual corresponding to Problem \ref{pro:rp} is 
 	\begin{equation}
 		\begin{aligned}
 			\mathcal{L}& :  \ \mathbb{S}_{+}^{3N} \times \mathbb{S}^{3N} \to \mathbb{R}.\\
 			\mathcal{L}(\mathbf{Z},\mathbf{\Lambda})& = f(\mathbf{Z}) + \sum_{i=1}^{N} \textup{tr}(\mathbf{\Lambda_i}(\mathbf{Z_{ii}} - \mathbf{I_3})) \\
 			& = f(\mathbf{Z}) + \sum_{i=1}^{N} \textup{tr}(\mathbf{\Lambda_i} \mathbf{Z_{ii}})-\textup{tr}(\mathbf{\Lambda_i}) \\
 			& = f(\mathbf{Z}) + \textup{tr}(\mathbf{\Lambda} \mathbf{Z}) - \textup{tr}(\mathbf{\Lambda}).
 		\end{aligned}
 	\end{equation} 
 	where $\mathbf{\Lambda_i} \in  \mathbb{S}^{3}$ and $\mathbf{\Lambda} \triangleq \textup{Diag}(\mathbf{\Lambda_1}, \cdots, \mathbf{\Lambda_N}) \in \mathbb{S}^{3N}$.
 	
 	Ideally, if the solution $\mathbf{Z^*}$ of Problem \ref{pro:rp} meets the condition $\textup{rank}(\mathbf{Z^*}) = 3$ (we will prove it in Sec. \ref{subsec:tightness} under zero noise), we can exactly recover the optimal relative poses $\{\mathbf{R^*}, t^*\}$. Firstly, we deploy a rank-3 decomposition of $\mathbf{Z^*}$ to obtain $\mathbf{Y^*} \in O(3)^N$. Then, for each $\mathbf{Y_i} \in O(3)$, we compute the sigular value decomposition $\mathbf{Y_i^*} = \mathbf{U_i} \mathbf{\Xi_i} \mathbf{V_i^T}$. The rotation we are looking for is then  
 		\begin{align}
 			\mathbf{R_i^*} = \mathbf{V_i} \begin{pmatrix}
 				1 & \ & \\
 				\ & \ddots & \\
 				\ & \      & \textup{det}(\mathbf{V_i U_i^T}) \\
 			\end{pmatrix}  \mathbf{U_i^T}
 		\end{align}
  Finally, distance $d^*$ and relative translations  $\mathbf{t^*} \triangleq [t_1, \cdots, t_N]$ can be recovered respectively in closed- form using $\mathbf{R^*}$.
  
  \subsection{Tightness of Convex Relaxation}
  \label{subsec:tightness}
  In this subsection, we provide proof of the tightness of relaxation in noise-free cases. That means we can exactly recover Problem \ref{pro:up}'s solution $\mathbf{R^*}$ from Problem \ref{pro:rp}'s  solution $\mathbf{Z^*}$, which bases on the following theorems:
  \begin{theorem}
	  \label{the:1}
	  If the solution of Problem \ref{pro:rp} $\mathbf{Z^*}$ can factor as $\mathbf{Z^*} = \mathbf{{R^*}^TR^*}, \mathbf{R^*} \in SO(3)^N$, then $\mathbf{R^*} $ is a global minimizer of Problem \ref{pro:up}.
  \end{theorem}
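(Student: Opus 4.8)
The plan is to exploit the fact that Problem \ref{pro:rp} is a genuine relaxation of Problem \ref{pro:up}, so that its optimal value lower-bounds that of Problem \ref{pro:up}; the factorization hypothesis then forces this bound to be attained by a feasible point of the original problem. Concretely, I would first show the embedding $\mathbf{R} \mapsto \mathbf{R^TR}$ carries every feasible point of Problem \ref{pro:up} into the feasible set $\mathcal{C}$ of Problem \ref{pro:rp}, and that it preserves the objective value.

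For the feasibility step, fix any $\mathbf{R} \in SO(3)^N$ and set $\mathbf{Z} = \mathbf{R^TR}$. Then $\mathbf{Z} \succeq 0$ because it is a Gram matrix, and its $i$-th diagonal block is $\mathbf{Z_{ii}} = \mathbf{R_i^T R_i} = \mathbf{I_3}$ since each $\mathbf{R_i} \in SO(3)$; hence $\mathbf{Z} \in \mathcal{C}$. For the objective step, I would note that $f$ and $g$ are built from exactly the same matrices $\mathbf{A}$, $\mathbf{B}$ and constant $C$, so substituting $\mathbf{Z} = \mathbf{R^TR}$ gives $f(\mathbf{R^TR}) = g(\mathbf{R})$ term by term. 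Consequently $\min_{\mathbf{Z} \in \mathcal{C}} f(\mathbf{Z}) \leq \min_{\mathbf{R} \in SO(3)^N} g(\mathbf{R})$, since the right-hand minimization runs over the subset $\{\mathbf{R^TR} : \mathbf{R} \in SO(3)^N\} \subseteq \mathcal{C}$ on which the two objectives coincide.

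To finish, I would invoke the factorization hypothesis. Writing $g^*$ for the optimal value of Problem \ref{pro:up}, the preceding inequality gives $f(\mathbf{Z^*}) \leq g^*$. Under the assumption $\mathbf{Z^*} = \mathbf{{R^*}^TR^*}$ with $\mathbf{R^*} \in SO(3)^N$, the objective-matching identity yields $f(\mathbf{Z^*}) = g(\mathbf{R^*})$, whence $g(\mathbf{R^*}) \leq g^*$. Since $\mathbf{R^*}$ is itself feasible for Problem \ref{pro:up}, we also have $g^* \leq g(\mathbf{R^*})$; the two inequalities sandwich to $g(\mathbf{R^*}) = g^*$, so $\mathbf{R^*}$ is a global minimizer.

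The argument is a short chain of inequalities, so I expect no deep obstacle; the only point requiring genuine care is verifying that the relaxation truly preserves the objective — that the identical $\mathbf{A}$, $\mathbf{B}$, $C$ appear in both $f$ and $g$, and that the quadratic form $\textup{vec}(\mathbf{Z})^T \mathbf{A}\, \textup{vec}(\mathbf{Z})$ reduces to the first term of $g$ upon setting $\mathbf{Z} = \mathbf{R^TR}$. Everything else is the standard feasible-subset lower-bound sandwich, and it does not even invoke $\mathbf{A} \succeq 0$ (that hypothesis is needed only for convexity, i.e. for actually solving Problem \ref{pro:rp}, not for this optimality transfer).
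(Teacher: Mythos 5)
Your argument is correct and is essentially the paper's own proof: the standard relaxation sandwich $f^* \leq g^* \leq g(\mathbf{R^*}) = f(\mathbf{Z^*}) = f^*$, with the added (and welcome) explicit verification that $\mathbf{R} \mapsto \mathbf{R^T R}$ maps $SO(3)^N$ into $\mathcal{C}$ and preserves the objective. No further changes are needed.
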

 \begin{proof}
	Since our relaxation can be seen as the expansion of feasible variable space, the optimal value of $f(\mathbf{Z})$ and $g(\mathbf{R})$ satisfy $f^* \leq g^*$. But if $\mathbf{Z^*}$ admits the factorization, then $\mathbf{R^*}$ is a feasible point of the Problem \ref{pro:up}, and then we have $g^* \leq g(\mathbf{R^*}) = f(\mathbf{Z^*})$. This means $g(\mathbf{R^*}) = g^*$ and consequently that $\mathbf{R^*}$ is a global minimizer of Problem \ref{pro:up}. 
\end{proof}

  \begin{theorem}
  	\label{the:0}
  	We denotes $\mathbf{P} \in \mathcal{P}$ where $\mathcal{P}= \{\mathbf{X} \in \mathbb{S}^{3N}: \mathbf{X} \neq \mathbf{0_{3N}}, \mathbf{X_{ii}} = \mathbf{0_3}, i\in[1,N]\}$. Then, for each $\mathbf{Z} \in \mathbb{S}_{+}^{3N}$ which is constructed by $\mathbf{Z} = \mathbf{R^T} \mathbf{R}$, $\mathbf{R} \in SO(3)^N$, if for any $i\in[1,N]$, there exist $j, j\neq i$  that $\mathbf{P_{ij}} = \mathbf{0_3}$ , then $\mathbf{Z+P}$ is not semi-definite.
  \end{theorem}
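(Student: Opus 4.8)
\emph{The plan is to} exploit the special geometry of $\mathbf{Z}=\mathbf{R}^{T}\mathbf{R}$: every diagonal block is $\mathbf{Z}_{ii}=\mathbf{I}_3$ while every off-diagonal block $\mathbf{Z}_{ij}=\mathbf{R}_i^{T}\mathbf{R}_j$ is orthogonal. Consequently, for any index pair $\{i,j\}$ the $2\times 2$ block principal submatrix $\bigl[\begin{smallmatrix}\mathbf{I}_3 & \mathbf{Z}_{ij}\\ \mathbf{Z}_{ij}^{T}&\mathbf{I}_3\end{smallmatrix}\bigr]$ has Schur complement $\mathbf{I}_3-\mathbf{Z}_{ij}^{T}\mathbf{Z}_{ij}=\mathbf{0}_3$, hence is rank-deficient (rank $3$, nullity $3$). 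The decisive observation is that a vanishing perturbation block $\mathbf{P}_{ij}=\mathbf{0}_3$, together with $\mathbf{P}_{ii}=\mathbf{P}_{jj}=\mathbf{0}_3$, leaves this principal submatrix of $\mathbf{Z}+\mathbf{P}$ equal to that of $\mathbf{Z}$, and therefore still singular.

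First I would convert this singularity into an explicit null direction. Pick any pair $\{i,j\}$ with $\mathbf{P}_{ij}=\mathbf{0}_3$ (the hypothesis supplies one, e.g. for $i=1$), fix $w\neq 0$, and let $v\in\mathbb{R}^{3N}$ be the vector whose $i$-th block is $-\mathbf{Z}_{ij}w$, whose $j$-th block is $w$, and which is zero elsewhere. Since $v$ is supported on $\{i,j\}$, the form $v^{T}(\mathbf{Z}+\mathbf{P})v$ sees only the $\{i,j\}$ principal submatrix, and a direct expansion using $\mathbf{Z}_{ij}^{T}\mathbf{Z}_{ij}=\mathbf{I}_3$ gives $v^{T}(\mathbf{Z}+\mathbf{P})v=0$ with $v\neq 0$. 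This already certifies that $\mathbf{Z}+\mathbf{P}$ cannot be positive definite.

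To push toward the stated conclusion that $\mathbf{Z}+\mathbf{P}$ is not semidefinite, I would argue by contradiction: assume $\mathbf{Z}+\mathbf{P}\succeq 0$. For a PSD matrix, $v^{T}(\mathbf{Z}+\mathbf{P})v=0$ forces $(\mathbf{Z}+\mathbf{P})v=\mathbf{0}$, so the vector above lies in $\ker(\mathbf{Z}+\mathbf{P})$. Reading off its $k$-th block for $k\neq i,j$ and cancelling $\mathbf{Z}_{ki}\mathbf{Z}_{ij}=\mathbf{Z}_{kj}$ yields the cross-block identity $\mathbf{P}_{kj}=\mathbf{P}_{ki}\mathbf{Z}_{ij}$ for every $k$. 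Because $\mathbf{Z}_{ij}$ is invertible, this means that whenever $\mathbf{P}_{ij}=\mathbf{0}_3$ the $i$-th and $j$-th block-columns of $\mathbf{P}$ share the same zero pattern and are tied by an orthogonal factor. I would then run this identity over all pairs furnished by the ``for every $i$ there is some $j$'' hypothesis, propagating the couplings across the index set to force every block of $\mathbf{P}$ to vanish, contradicting $\mathbf{P}\neq\mathbf{0}_{3N}$.

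I expect this last step to be the main obstacle. The relation $\mathbf{P}_{kj}=\mathbf{P}_{ki}\mathbf{Z}_{ij}$ only couples two block-columns at a time, so closing the argument requires global combinatorial bookkeeping over the pattern of vanishing blocks; one must be careful because support patterns in which the nonzero blocks form a complete multipartite structure are \emph{locally} consistent with these identities, so the contradiction has to use the precise orthogonal couplings rather than the support alone, together with the fact that the hypothesis pins a vanishing block on every vertex at once. If that closure turns out to be too strong to carry through in full generality, the robust fallback is the isotropic-vector construction of the second paragraph, which already shows $\mathbf{Z}+\mathbf{P}$ fails to be positive definite and hence is not in the interior of the PSD cone.
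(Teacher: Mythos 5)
Your first two steps are correct and are the natural attack: the isotropic vector $v$ supported on a pair $\{i,j\}$ with $\mathbf{P}_{ij}=\mathbf{0}_3$ indeed gives $v^{T}(\mathbf{Z}+\mathbf{P})v=0$, and under the assumption $\mathbf{Z}+\mathbf{P}\succeq 0$ the coupling identity $\mathbf{P}_{kj}=\mathbf{P}_{ki}\mathbf{Z}_{ij}$ for all $k$ follows exactly as you derive it. The genuine gap is the closure step, and you have located it precisely---but your hope that ``the precise orthogonal couplings'' will finish the job cannot be realized, because under the stated hypothesis (every block-row of $\mathbf{P}$ contains some vanishing off-diagonal block) the conclusion is in fact false. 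Take $N=4$, let the vanishing blocks be $\mathbf{P}_{12}=\mathbf{P}_{34}=\mathbf{0}_3$ (so every index $i$ has its required partner $j$), and set $\mathbf{P}_{ij}=-\tfrac12\mathbf{Z}_{ij}=-\tfrac12\mathbf{R}_i^{T}\mathbf{R}_j$ for the remaining off-diagonal pairs. This $\mathbf{P}$ is symmetric, nonzero, has zero diagonal blocks, and satisfies all of your coupling identities. Writing $y_i=\mathbf{R}_i x_i$, $p=y_1+y_2$, $q=y_3+y_4$, one computes
\[
x^{T}(\mathbf{Z}+\mathbf{P})x=\Vert p+q\Vert^{2}-p^{T}q=\Vert p\Vert^{2}+\Vert q\Vert^{2}+p^{T}q\;\ge\;\Vert p\Vert^{2}+\Vert q\Vert^{2}-\Vert p\Vert\,\Vert q\Vert\;\ge\;0,
\]
so $\mathbf{Z}+\mathbf{P}\succeq 0$. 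The complete-multipartite support patterns you flagged are therefore not merely ``locally consistent'': with suitably damped orthogonal couplings they yield honest PSD perturbations, so no bookkeeping can close the argument as stated.

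What is missing is connectivity, not cleverness. Let $G_0$ be the graph on $[1,N]$ whose edges are the pairs with $\mathbf{P}_{ij}=\mathbf{0}_3$. If $G_0$ is \emph{connected}, your identity closes the proof by pure support propagation: from $\mathbf{P}_{ij}=\mathbf{0}$ and $\mathbf{P}_{jk}=\mathbf{0}$, the relation attached to the edge $\{j,k\}$ gives $\mathbf{P}_{ik}=\mathbf{P}_{ij}\mathbf{Z}_{jk}=\mathbf{0}$, so adjacency is transitive, a connected $G_0$ must be complete, every off-diagonal block of $\mathbf{P}$ vanishes, and $\mathbf{P}=\mathbf{0}$ contradicts $\mathbf{P}\in\mathcal{P}$. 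The theorem should therefore be stated and proved under connectivity of the zero-pattern graph (equivalently, of the observation graph $\mathcal{G}$), not mere absence of isolated vertices; the same $N=4$ example (two mutually unobserving pairs of robots) also defeats the uniqueness claim of Theorem \ref{the:2}, consistent with the physical fact that the relative pose between disconnected sub-teams is unobservable. Finally, note that your fallback conclusion (``not positive definite'') is true but strictly weaker than what Theorem \ref{the:0} must deliver for its use in Theorem \ref{the:2}, so it is not an adequate substitute.
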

	We refer readers to the supplementary material for the proof of Theorem \ref{the:0}. Then based on the above two theorems, we propose the following theorem:
	\begin{theorem}
	\label{the:2}
	 	In noised-free cases, let $\tilde{\mathbf{R}}$ denotes the true (latent) relative rotations. If there is no isolated vertex in  $\mathcal{G}$, there's one and there's only one solution of the relaxed Problem \ref{pro:rp}, and that is $\tilde{\mathbf{Z}} = \tilde{\mathbf{R}}^T \tilde{\mathbf{R}}$.
	\end{theorem}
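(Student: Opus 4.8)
The plan is to first exhibit $\tilde{\mathbf{Z}}$ as a global minimizer of Problem \ref{pro:rp} and then rule out every competing minimizer. In the noise-free case the latent rotations $\tilde{\mathbf{R}}$ satisfy each measurement equation exactly, so every error term $e^{XY}_{j_1j_2}$ vanishes at $\tilde{\mathbf{R}}$; since the objective is a sum of squared errors, $g(\tilde{\mathbf{R}}) = 0$. Because $f$ arises from $g$ through the affine substitution $\mathbf{R}^T\mathbf{R} \mapsto \mathbf{Z}$ with the same coefficients $\mathbf{A}, \mathbf{B}, C$, it remains a sum of squares of affine functions of $\mathbf{Z}$, whence $f(\mathbf{Z}) \geq 0$ for every $\mathbf{Z}$ and $f(\tilde{\mathbf{Z}}) = g(\tilde{\mathbf{R}}) = 0$. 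As $\tilde{\mathbf{Z}} = \tilde{\mathbf{R}}^T\tilde{\mathbf{R}} \in \mathcal{C}$, it attains the global minimum value $0$, which settles existence.

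For uniqueness I would take an arbitrary minimizer $\mathbf{Z}^*$ of Problem \ref{pro:rp}, so $\mathbf{Z}^* \in \mathcal{C}$ and $f(\mathbf{Z}^*) = 0$, and set $\mathbf{P} := \mathbf{Z}^* - \tilde{\mathbf{Z}}$. Both matrices are symmetric with identity diagonal blocks, so $\mathbf{P} \in \mathbb{S}^{3N}$ and $\mathbf{P}_{ii} = \mathbf{0}_3$ for all $i$. Because $f$ is a sum of squares, $f(\mathbf{Z}^*) = 0$ forces every individual error to vanish, i.e. $k_{j_1j_2}^T(\mathbf{Z}^*_{XY} t_{Y_{j_1j_2}} - t_{X_{j_1j_2}}) = 0$; subtracting the corresponding identity at $\tilde{\mathbf{Z}}$ gives the homogeneous relations $k_{j_1j_2}^T \mathbf{P}_{XY} t_{Y_{j_1j_2}} = 0$ for every edge $e_{XY} \in \mathcal{E}$ and every admissible timestamp pair. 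Writing each relation as $\langle \mathbf{P}_{XY},\, k_{j_1j_2} t_{Y_{j_1j_2}}^T\rangle = 0$, a sufficiently exciting (generic) family of measurements on the edge spans $\mathbb{R}^{3\times 3}$ and thereby forces $\mathbf{P}_{XY} = \mathbf{0}_3$ on every edge of $\mathcal{G}$.

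The final step invokes the graph hypothesis through Theorem \ref{the:0}. Since $\mathcal{G}$ has no isolated vertex, each index $i$ possesses at least one neighbour $j$, and the edge block $\mathbf{P}_{ij} = \mathbf{0}_3$ was just shown to vanish; hence the hypothesis of Theorem \ref{the:0} is met by $\mathbf{P}$. If $\mathbf{P} \neq \mathbf{0}_{3N}$, then $\mathbf{P} \in \mathcal{P}$ and Theorem \ref{the:0} asserts that $\mathbf{Z}^* = \tilde{\mathbf{Z}} + \mathbf{P}$ is not positive semi-definite, contradicting $\mathbf{Z}^* \in \mathcal{C} \subset \mathbb{S}_+^{3N}$. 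Therefore $\mathbf{P} = \mathbf{0}_{3N}$ and $\mathbf{Z}^* = \tilde{\mathbf{Z}}$, so the minimizer is unique and equal to $\tilde{\mathbf{R}}^T\tilde{\mathbf{R}}$; by Theorem \ref{the:1} this factorizable $\tilde{\mathbf{Z}}$ then recovers the global minimizer of Problem \ref{pro:up}.

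The step I expect to be the main obstacle is passing from the per-edge scalar relations $k_{j_1j_2}^T \mathbf{P}_{XY} t_{Y_{j_1j_2}} = 0$ to the conclusion $\mathbf{P}_{XY} = \mathbf{0}_3$: this is exactly where noise-free pairwise rigidity is exploited and where an implicit assumption of enough, sufficiently generic measurements per edge is required so that the rank-one matrices $k_{j_1j_2} t_{Y_{j_1j_2}}^T$ span the full $3\times 3$ space. Once the edge blocks are pinned down, the non-edge blocks are controlled entirely by the positive-semidefiniteness geometry encapsulated in Theorem \ref{the:0}, so the remainder of the argument is essentially bookkeeping.
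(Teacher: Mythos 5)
Your proposal is correct and follows essentially the same route as the paper: existence via the vanishing of the sum-of-squares objective at $\tilde{\mathbf{Z}}$ (the paper dresses this up with KKT conditions and $\tilde{\mathbf{\Lambda}}=\mathbf{0}$, you use nonnegativity of $f$ directly), and uniqueness by showing the perturbation $\mathbf{P}$ has vanishing blocks on every edge and then invoking Theorem~\ref{the:0} together with the no-isolated-vertex hypothesis. The genericity caveat you flag --- that the rank-one matrices $k_{j_1j_2}t_{Y_{j_1j_2}}^T$ must span $\mathbb{R}^{3\times 3}$ to force $\mathbf{P}_{XY}=\mathbf{0}_3$ --- is exactly the step the paper also relies on but states only loosely (``$k_{j_1j_2}$ and $t_{B_{j_1j_2}}$ are not always zero''), so your version is, if anything, more explicit about the required assumption.
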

	\begin{proof}
	The proof of Theorem \ref{the:2} includes two steps: 1. $\tilde{\mathbf{Z}}$ is a solution of Problem \ref{pro:rp}, 2. there is no other solution. We start with the first step. Since both Problem \ref{pro:rols} and Problme \ref{pro:origin} are least-square problems, we can write $g(\mathbf{R}) = e(\mathbf{R})^Te(\mathbf{R})$ and $f(\mathbf{Z}) = v(\mathbf{Z})^T v(\mathbf{Z})$. According to the error modeling in Equ.\ref{error:1} and Equ.\ref{error:2}, we obtain $e(\tilde{\mathbf{R}}) = v(\tilde{\mathbf{Z}}) = 0$, consequently $\nabla g(\tilde{\mathbf{R}}) = 2 e(\tilde{\mathbf{R}}) \nabla e(\tilde{\mathbf{R}}) = \mathbf{0}$ and similarly $\nabla f(\tilde{\mathbf{Z}}) = \mathbf{0}$. 
	Then we let $\tilde{\mathbf{\Lambda}} = 0$, and obtain that $\tilde{\mathbf{Z}}$ is a global minimizer since the optimality condition is satisfied: 1. Primal Feasibility: $\tilde{\mathbf{Z}} \succeq 0$, 2. Dual Feasibility: $\mathbf{Q}(\tilde{\mathbf{Z}}, \tilde{\mathbf{\Lambda}}) = \nabla \mathcal{L}(\tilde{\mathbf{Z}}, \tilde{\mathbf{\Lambda}}) = \nabla f(\tilde{\mathbf{Z}}) + \tilde{\mathbf{\Lambda}} = \mathbf{0} \succeq 0$, 3. Lagrangian multiplier: $\textup{tr}(\mathbf{Q}(\tilde{\mathbf{Z}}, \tilde{\mathbf{\Lambda}}) \tilde{\mathbf{Z}}) = 0$.
	
	Then we prove the second step by contradiction. If there is another global minimizer $\mathbf{G} = \tilde{\mathbf{Z}} + \mathbf{P}$, then $\mathbf{P} \in \mathcal{P}$ and $f(\mathbf{G}) = 0$, which means:
	\begin{equation}
		\begin{aligned}
		f(\mathbf{G}) = &\textup{vec}(\mathbf{G})^T \mathbf{A} \textup{vec}(\mathbf{G}) + \textup{tr}(\mathbf{B} \mathbf{G}) + C \\
		= &\underbrace{\textup{vec}(\tilde{\mathbf{Z}})^T \mathbf{A} \text{vec}(\tilde{\mathbf{Z}}) + \textup{tr}(\mathbf{B} \tilde{\mathbf{Z}}) + C}_{f(\tilde{\mathbf{Z}}) = 0} + \\
		& \textup{vec}(\mathbf{P})^T \mathbf{A} \text{vec}(\mathbf{P}) + (\underbrace{2\textup{vec}(\tilde{\mathbf{Z}}^T) \mathbf{A} + \textup{vec}(\mathbf{B})^T}_{\nabla f(\tilde{\mathbf{Z}}) = \mathbf{0}}) \textup{vec}(\mathbf{P}) \\
		= &\textup{vec}(\mathbf{P})^T \mathbf{A} \textup{vec}(\mathbf{P}) = 0. \nonumber
		\end{aligned}
	\end{equation}
	For simplicity, we only base on the construction of $\mathbf{A}$ ($\mathbf{H}$) in the first formulation to continue derivation.
	\begin{equation}
	\begin{aligned}
	0 = &\textup{vec}(\mathbf{P})^T \mathbf{A} \textup{vec}(\mathbf{P}) \\
	  = &\sum_{e_{AB} \in \mathcal{E} \atop \{j_1,j_2\}\in J} \textup{tr}( \mathbf{A^{XY}_{j_1j_2} P B^{XY}_{j_1j_2} P}) \\
	  = &\sum_{e_{AB} \in \mathcal{E} \atop \{j_1,j_2\}\in J} \textup{tr}(\mathbf{C_B t_{B_{j_1j_2}} t_{B_{j_1j_2}}^T C_B^T  P C_A k_{j_1j_2} k_{j_1j_2}^T C_A^T P}) \\
	  = &\sum_{e_{AB} \in \mathcal{E} \atop \{j_1,j_2\}\in J} (k_{j_1j_2}^T \mathbf{C_A^T P C_B} t_{B_{j_1j_2}} )^2. \nonumber\\
	\end{aligned}
	\end{equation}
	As $k_{j_1j_2}$ and $t_{B_{j_1j_2}}$ are not always zero vector, there must be  $\mathbf{C_A^T P C_B} = \textup{Block}(\mathbf{P},3,3,3A,3B) = \mathbf{P_{AB}} = \mathbf{0}$ for each $e_{AB}$. Then for $\mathcal{G}$ which includes no isolated vertex, it means that for each colunm of $\mathbf{P}$, there at least exsit $\{i,j\}, i\neq j$ that $\mathbf{P_{ij}} = \mathbf{0}$. Then according to the Theorem \ref{the:0}, $\tilde{\mathbf{Z}} + \mathbf{P}$ is not semi-definite, which means there is no another feasible point $\mathbf{G}$ satisfying $f(\mathbf{G}) = f^* = 0$.
	\end{proof}
 	
 	Here we have provided complete proof of tightness for noise-free cases. Unfortunately, in experiments, we find any noise will influence the tightness, resulting in $\textup{rank}(\mathbf{Z^*}) \textgreater 3$. It means that in the real world, noises from detection and odometry drift will greatly affect the exactness of the rank-3 decomposition and lead to enormous errors. Thus, we focus on constraining rank($\mathbf{Z^*}$) for the Problem \ref{pro:rp} under noise.	
 	
 	\subsection{Convex Iteration with Linear Rank Cost}
 	\label{subsec:convex}
	Here, we firstly consider adding a rank penalty to the original cost function to constrain $\textup{rank}(\mathbf{Z^*})$
	\begin{problem}[Problem with Rank Cost]
		\label{pro:rank_cost}
			\begin{gather}
			\underset{\mathbf{Z} \in \mathcal{C}}{\min} \ \textup{vec}(\mathbf{Z})^T A \textup{vec}(\mathbf{Z}) + \textup{tr}(\mathbf{B Z}) + C + \alpha h(\mathbf{Z}) \notag \\
			\mathcal{C} = \{ \mathbf{X} \in \mathbb{S}_{+}^{3N}: \mathbf{X} \succeq 0, \mathbf{X_{ii}} = \mathbf{I_3} ,i \in [1,N]\} \notag 
			\end{gather}
	\end{problem}
	where $h(\mathbf{Z})$ is a rank cost and $\alpha$ is a weight, e.g. $h(\mathbf{Z}) = 	\left| \textup{rank}(\mathbf{Z}) - 3 \right|$. Actually, we can drop $\left|  \cdot \right| $ by assuming that $\textup{rank}(\mathbf{Z^*})$ is always greater than 3 because of little-constrained solution space $\mathcal{C}$. Even so, the cost of rank is non-convex and still challenging  to minimize globally. Thus, we seek for convex (linear) heuristic cost function to replace function $\textup{rank}(\mathbf{Z})$. We consider the following function:
	\begin{equation}
		\begin{aligned}
		h(\mathbf{Z}) = \sum_{i=4}^{3N} \lambda_i(\mathbf{Z}),
 		\end{aligned}
	\end{equation}
	where $\lambda_i(\mathbf{Z})$ is the $i$th largest eigenvalue of $\mathbf{Z}$. Since $\mathbf{Z} \succeq 0$, the minimal value of $h(\mathbf{Z})$ may be zero, which implies that $\textup{rank}(\mathbf{Z}) \leq 3$. Thus, combining it with the assumption $\text{rank}(\mathbf{Z^*}) \geq 3$, we can implicitly constrain $\textup{rank}(\mathbf{Z^*})$. We compute $h(\mathbf{Z})$ by solving the following SDP problem \cite{dattorro2005convex}:

	\begin{problem}[Sum of Unnecessary Eigenvalues]
		\label{pro:trace}
			\begin{gather}
			\sum_{i=4}^{3N} \lambda_i(\mathbf{Z}) = \argmin_{\mathbf{C}\in S^{3N}} \textup{tr}(\mathbf{CZ}) \notag \\
			\textup{s.t}  \ \textup{tr}(\mathbf{C}) = 3N-3,  \quad  \mathbf{0} \preceq \mathbf{C} \preceq \mathbf{I}, \notag
			\end{gather}
	\end{problem}
	which can be solved in closed-form:
	
	\begin{equation}
		\label{equ:close}
		\begin{aligned}
		&\mathbf{C^*} =  \mathbf{O^T} \mathbf{O}, \\
		&\mathbf{O} =  \mathbf{N}(:, 4:3N),
		\end{aligned}
	\end{equation}
	where $\mathbf{V}$ is from the eigendecomposition $\mathbf{Z} = \mathbf{N}^T \mathbf{\Lambda} \mathbf{N}$.
	Finally, combining the rank-constrained SDP and the closed-form calculation of $\mathbf{C}$, we propose an algorithm that optimizes Problem \ref{pro:rank_cost} and Problem \ref{pro:trace} iteratively, summarized in Algorithm \ref{alg:CICRPE}. The weight $\alpha$ is adjusted to tradeoff $f(\mathbf{Z})$ and $h(\mathbf{Z})$. Although there is no strict guarantee about the number of iterations, in practice, $k = 2 \sim 4$ suffices. In actual, the method of convex iteration has been successfully applied to sensor networking localization \cite{dattorro2005convex} and distance-geometric inverse kinematics\cite{giamou2022convex}. And we are the first to introduce it into relative pose estimation. The Fig.\ref{fig:funciton} demonstrates the effect of rank cost in noised cases. Although $g(\mathbf{R})$  is a non-convex function, our method (minimize $f(\mathbf{Z}) + \alpha h(\mathbf{Z})$) can obtain the global minimizer while optimizing pure $f(\mathbf{Z})$ can't due to the information loss in low-rank decomposition.

		\begin{algorithm}[h]  
			\caption{\label{alg:CICRPE}Convex Iteration for Mutual Localization}
			\KwIn{specification in Problem \ref{pro:rank_cost} including $\mathbf{A, B}, C$}
			\KwOut{$\mathbf{Z^*}$ with rank($\mathbf{Z^*}$) = 3}
			Initialize $\mathbf{C^{\{0\}}} = \mathbf{0}$, $\alpha^{\{0\}} = 0$ \\
			\While{\textup{rank}($\mathbf{Z^{\{k\}}}$) $\neq 3$}
			{
				Obtain $\mathbf{Z^{\{k+1\}}}$ by solving Problem \ref{pro:rank_cost} \\
				\quad with $\mathbf{C^{\{k\}}}$ and  $\alpha^{\{k\}}$ \nonumber \\
				Obtain $\mathbf{C^{\{k+1\}}}$ by solving Problem \ref{pro:trace} \\ 
				\quad with $\mathbf{Z^{\{k+1\}}}$ using Equ. (\ref{equ:close}) \\
				Obtain $\alpha^{\{k+1\}} = f(\mathbf{Z^{\{k+1\}}}) / h(\mathbf{Z^{\{k+1\}}})$ \\
			}
			Return $\mathbf{Z^*} = \mathbf{Z^{\{k\}}}$
		\end{algorithm}
	\vspace{-0.3cm}

	\section{Experiment}
	\label{sec:experiment} 
	In this section, we first confirm the optimality of our proposed algorithm by comparing it with local optimization-based methods. In addition, we show our method's scalability with different robots amount. Next, to present the robustness of our method, we compare its performances under different noise levels.	Then, we apply our method in the real world to verify its practicality and robustness. Our algorithm is implemented in MATLAB using cvx \cite{grant2014cvx} and in C++ using MOSEK \cite{mosek} respectively. We run simulated experiments in PC (Intel i5-9400F) and real-world experiments in NUC (i7-8550U) with a single core.
	
	\subsection{Experiments on Synthetic Data}
		To simulate bearing measurement, we follow our previous work \cite{9827567} to generate synthetic data. Firstly we obtain random trajectories for multiple robots. Then we produce noised bearing measurements by adding Gaussian noise with different standard deviations $\sigma$. Finally, we take the first pose of each trajectory as the local world frame to obtain each robot’s local odometry, which is shared with other robots.
	\begin{figure}[t]
		\centering
		\includegraphics[width=0.45\textwidth]{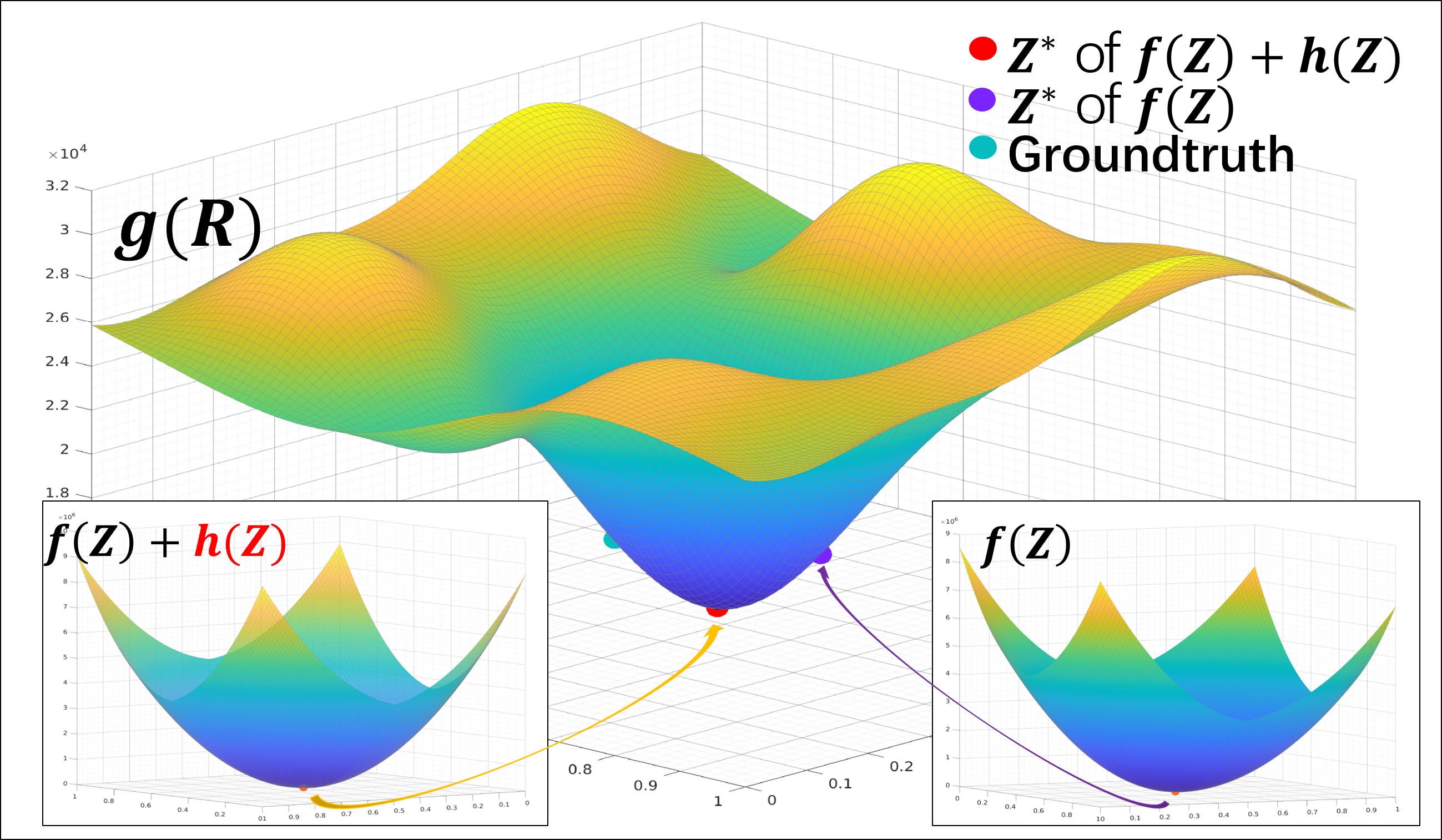}
		\caption{\label{fig:funciton} Demonstration of the effect of relaxation and rank cost. The function figure of $g(\mathbf{R})$ is plotted by applying perturbation to two dimensions of the global minimizer while fixing other dimensions. The purple and red dots represent different $\mathbf{R^*}$ recovered from $\mathbf{Z^*}$ of the Problem \ref{pro:rp} and the Problem \ref{pro:rank_cost}. }
		\vspace{-0.7cm}
	\end{figure}
	\subsubsection{Optimality}
	Firstly we verify the optimality of our method with local optimization-based algorithms. Note that $g(\mathbf{R})$ is optimized among the Stiefel manifold in the original problem. Thus we compare our method with a Riemannian manifold optimizer ($\mathbf{Riem. Opt.}$). In addition, we also compare with the Levenberg-Marquardt ($\mathbf{LM. Opt.}$) algorithm, which is the most typical solver for nonlinear least-squares problems. We carry on this experiment in MATLAB while using manopt \cite{manopt} for the manifold optimization solver and $lsqnonlin$ for the least-square problem solver. We changed the number of robots and conducted 100 tests with random measurements for each robot's amount. Note that this experiment is for verifying the optimality. Thus we add no noise to measurements. The result is presented in Fig.\ref{fig:optimality}. As it shows, our method ($\mathbf{Convex Iteration}$ and $\mathbf{SDP}$) can guarantee 100$\%$ optimality in any case, while both $\mathbf{Riem. Opt.}$ and $\mathbf{LM. Opt.}$ are possible to drop into local minimums when using random initial guesses.
	
	\begin{figure}[t]
		\centering
		\includegraphics[width=0.5\textwidth]{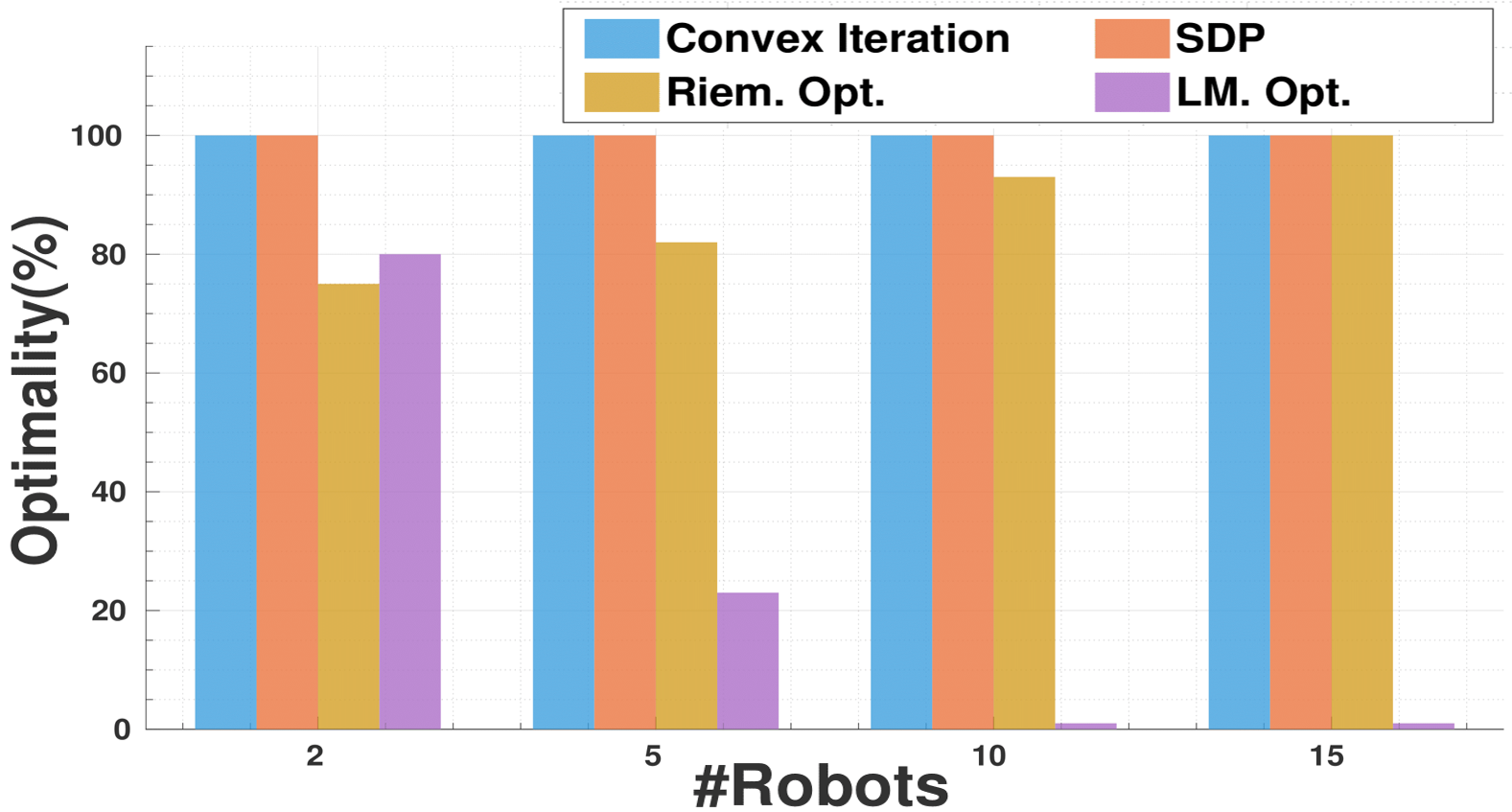}
		\caption{\label{fig:optimality} Optimality comparison results between our proposed method and local optimization-based algorithms for different robot amounts. $\mathbf{SDP}$ denotes minimizing the original $f(\mathbf{Z})$ without the rank cost. In noise-less cases, $\mathbf{SDP}$ can also obtain the optimal solution since $\textup{rank}(\mathbf{Z^*})$ is always 3.}
		\vspace{-2.0cm}
	\end{figure}
	
	\subsubsection{Runtime and Scalability}
	
	Since both our proposed problem formulations in Sec. \ref{subsec:formulation1} and Sec.\ref{subsec:formulation2} fix the number of variables by eliminating the distance variables, the computing time is only related to the number of involved robots. Thus in this experiment, we chang the robot amount and evaluate the runtime with MATLAB and C++ on different platforms. We think it is enough for a multi-robot system to adjust robots' coordination frames at 1Hz. Thus, as presented in Fig.\ref{fig:runtime}, our method has acceptable runtime on both PC and onboard computer, showing strong practicality in real multi-robot applications. In addition, as the number of robots increases, our method offers exemplary performance in scalability.
	\subsubsection{Robustness and Effectiveness of Rank Cost}
	\begin{figure}[b]
		\vspace{-0.8cm}
		\centering
		\includegraphics[width=0.5\textwidth]{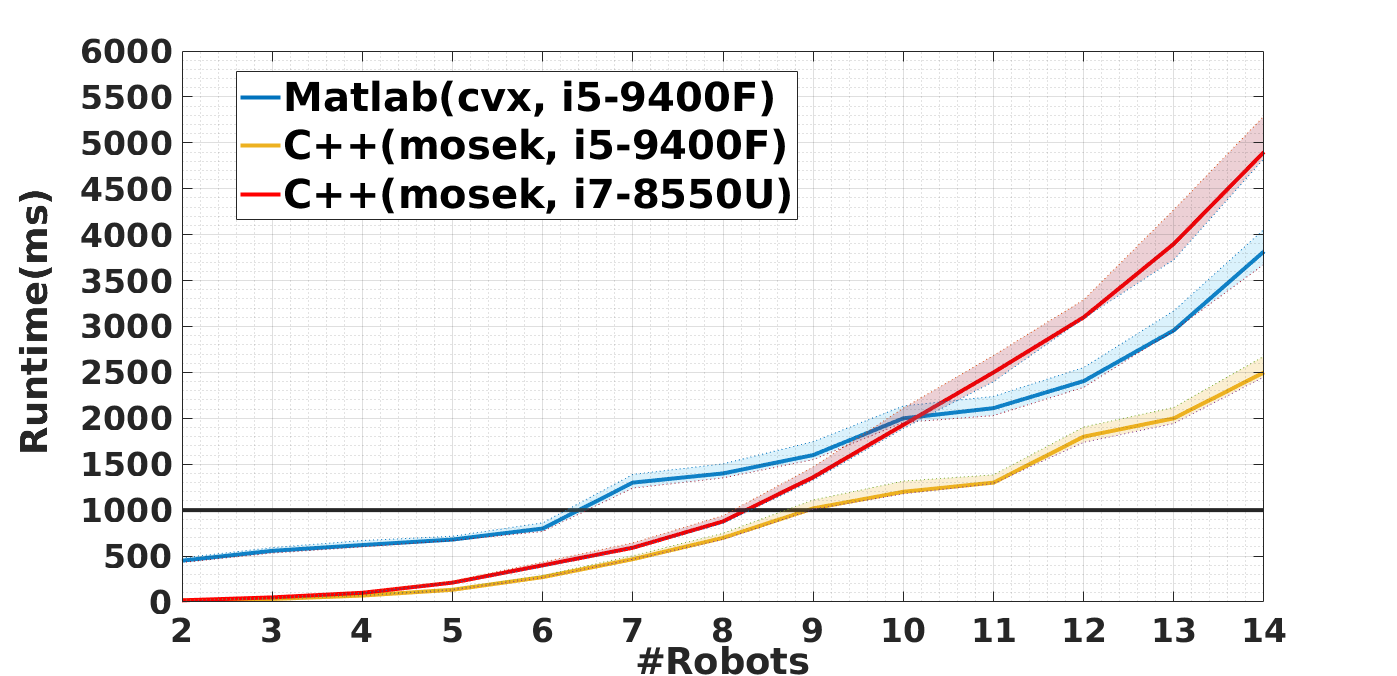}
		\vspace{-0.5cm}
		\caption{\label{fig:runtime} Comparison of runtime with different robot amounts. (solid line: mean; shaded area: 1-sigma standard deviation; black line: 1 Hz standard)}
	\end{figure}
	
	\begin{figure}[t]
		\centering
		\includegraphics[width=0.5\textwidth]{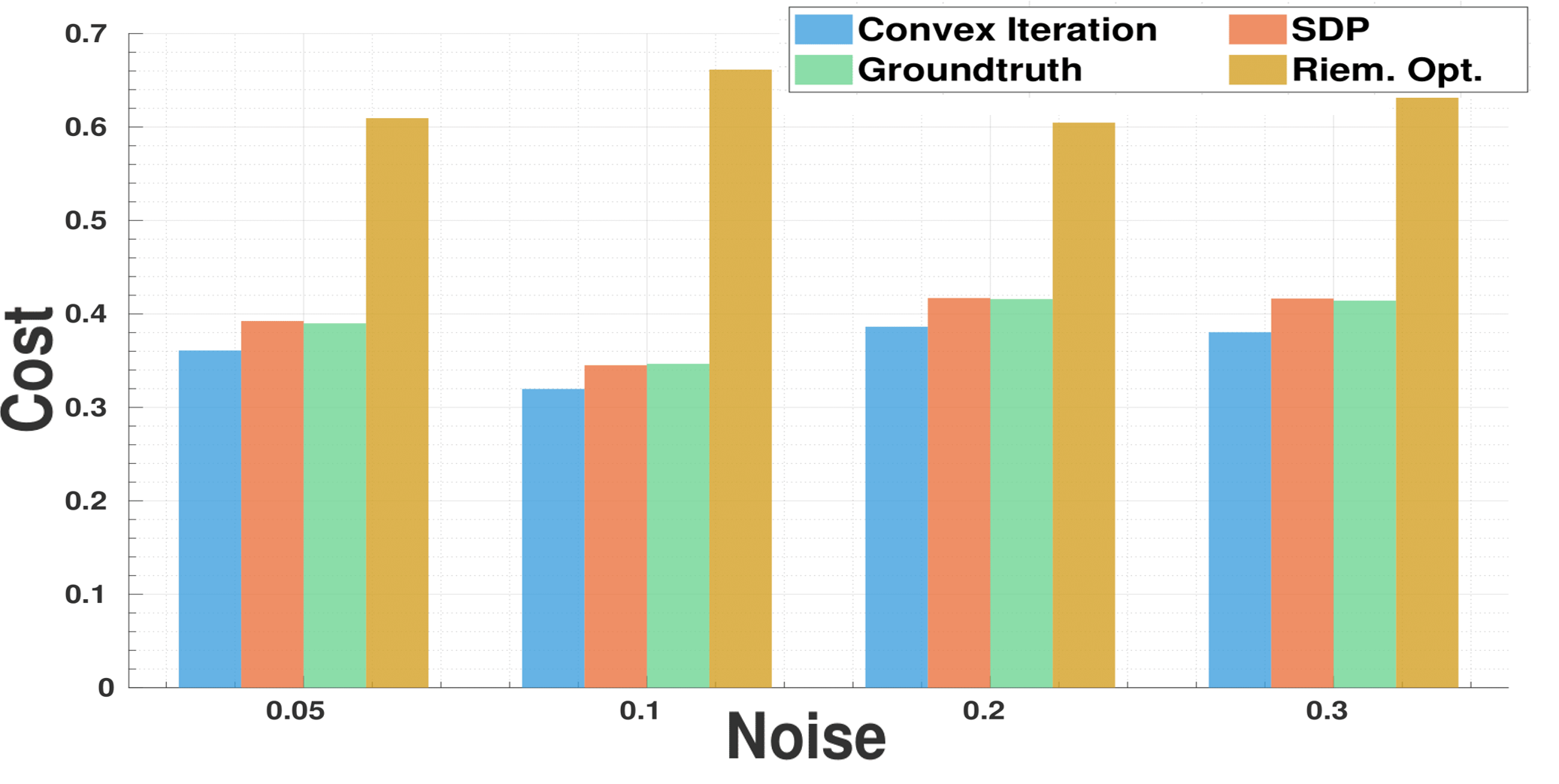}
		\vspace{-0.6cm}
		\caption{\label{fig:cost} Comparison of costs between different methods and the ground truth with varying levels of noise.}
		\vspace{-0.3cm}
	\end{figure}
	
	To present our proposed method's robustness and the effectiveness of rank cost, we add noise into simulated bearing observations and change the noise level. Fig.\ref{fig:cost} presents costs of solutions obtained by our proposed method ($\mathbf{Convex Iteration}$), pure SDP ($\mathbf{SDP}$), local optimization-based method ($\mathbf{Riem. Opt.}$) and the cost corresponding to the ground truth ($\mathbf{Groundtruth}$). $\mathbf{Riem. Opt.}$ use random matrices in the Stiefel manifold as initial values. Each bar represents the average cost of 100 random trials with five robots on simulated data using different noise levels $\sigma$. As it shows, our proposed method can consistently obtain the solution that $\textup{rank}(\mathbf{Z^*}) = 3$ and the lowest cost under noised cases, even lower than the ground  truth. In contrast, $\mathbf{SDP}$ always gets higher cost and $\mathbf{Riem. Opt.}$ always obtains the highest cost due to its erroneous local solution.

	\begin{figure}[t]
		\centering
		\includegraphics[width=0.45\textwidth]{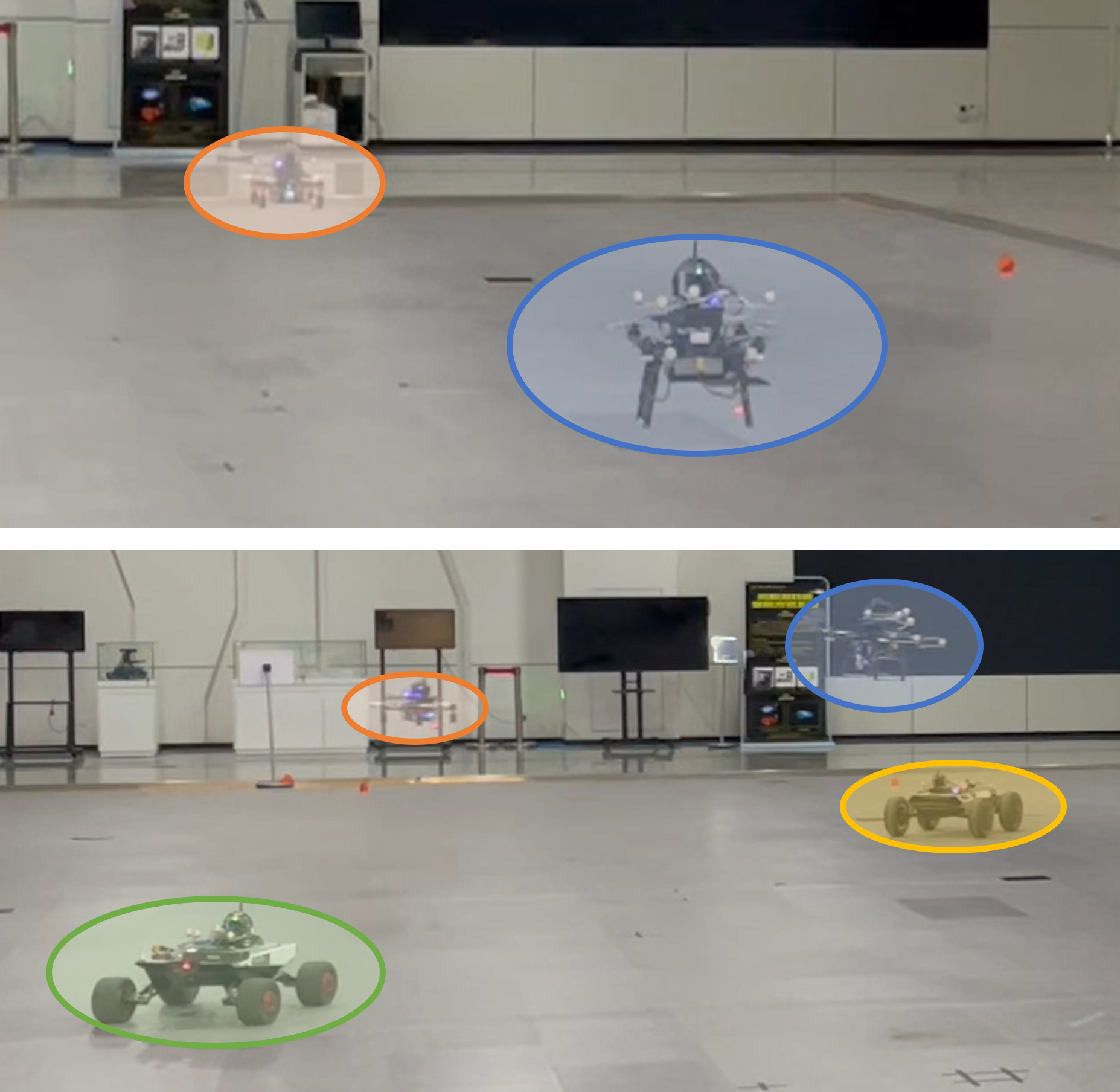}
		\vspace{-0.2cm}
		\caption{\label{fig:real} The multi-robot platforms in two real-world experiments.}
		\vspace{-0.4cm}
	\end{figure}

	\subsection{Real-world Experiments}
	Finally, we apply our proposed algorithm in the real world, as shown in Fig.\ref{fig:real}. We use platforms in our previous work as detection equipment to obtain bearing measurements, including a fish-eye camera and tagged LED marker.Firstly, we carry out experiments using two UAVs. Two robots move in 3D predetermined trajectories and observe each other. And we use motion capture as robots' local odometry and consider them the ground truth. All data, including poses and bearings, are shared with networking and collected to formulate the problem in each robot's onboard computer. 
	
	Then, to present that our method can easily fuse multiple robots' data to estimate each robot's pose consistently, we carry on four-robots experiments, including two UGVs and two UAVs. In this experiment, we drop the bearing measurements between two UAVs to verify that we can recover relative poses between two robots even if there is no direct observation between them. We transform all robots' odometry to one robot's frame, as shown in Fig.\ref{fig:exp_all}. It shows that our method can accurately recover relative poses in each experiment while the local optimization-based algorithm may drop into the local minimum. In addition, the eigenvalue, cost and residual of obtained solutions in four-robot experiments are presented in Fig.\ref{fig:cost-time}. It shows that after collecting enough data at the 5th second, our method can consistently obtain a 3-rank solution and keep the lowest cost and residual, while other methods have higher cost and residual. More details of the experiments are available in the attached video.
	
	\begin{figure}[t]
		\centering
		\includegraphics[width=0.45\textwidth]{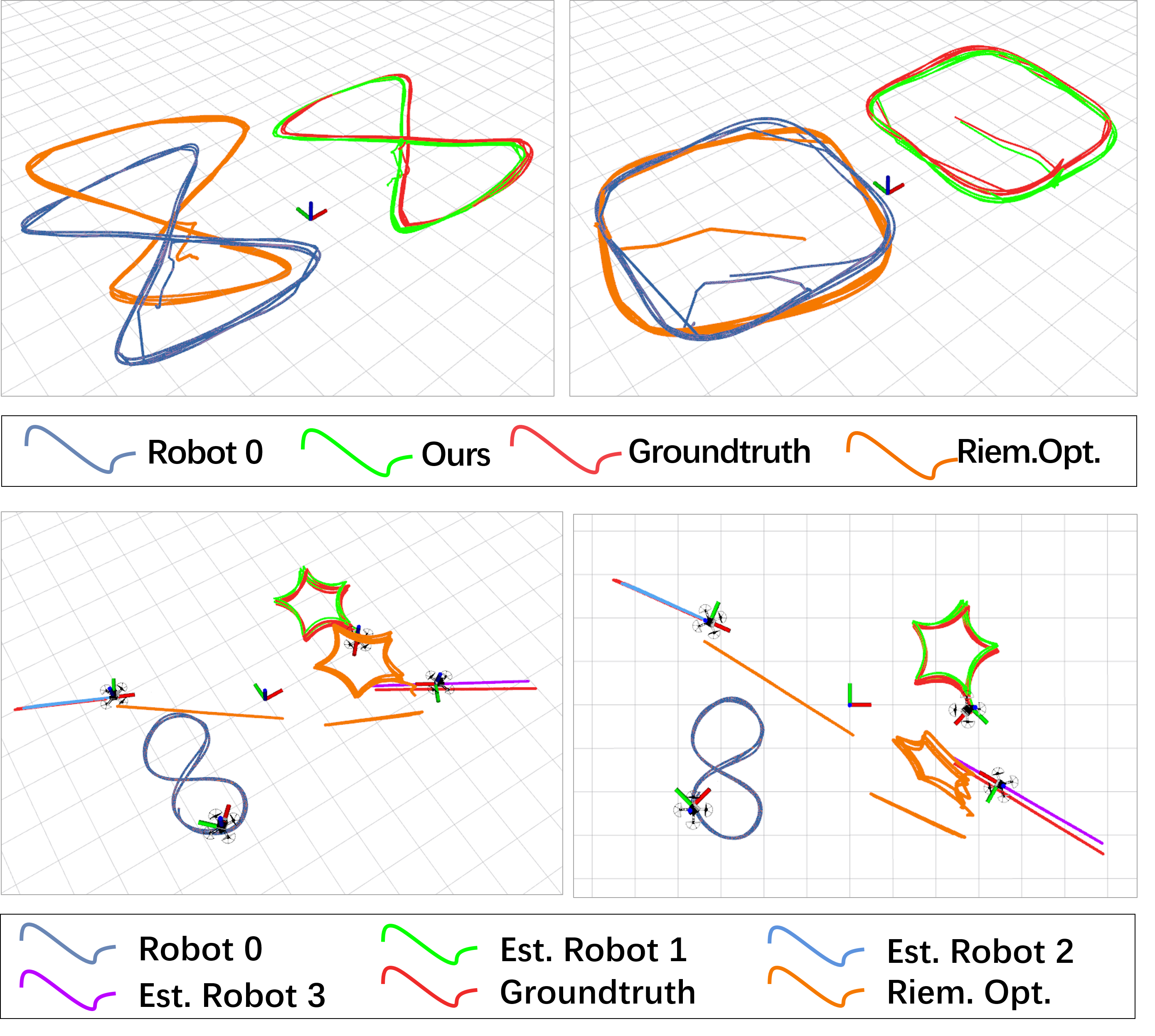}
		\vspace{-0.2cm}
		\caption{\label{fig:exp_all} Result of real-world experiments. The trajectories of robots are all presented in robot 0's coordination frame. Our method successfully recovers the most accurate relative pose, while the local optimization-based method falls into the local minimum and leads to an erroneous solution.}
		\vspace{-0.3cm}
	\end{figure}

	\begin{figure}[t]
		\centering
		\includegraphics[width=0.48\textwidth]{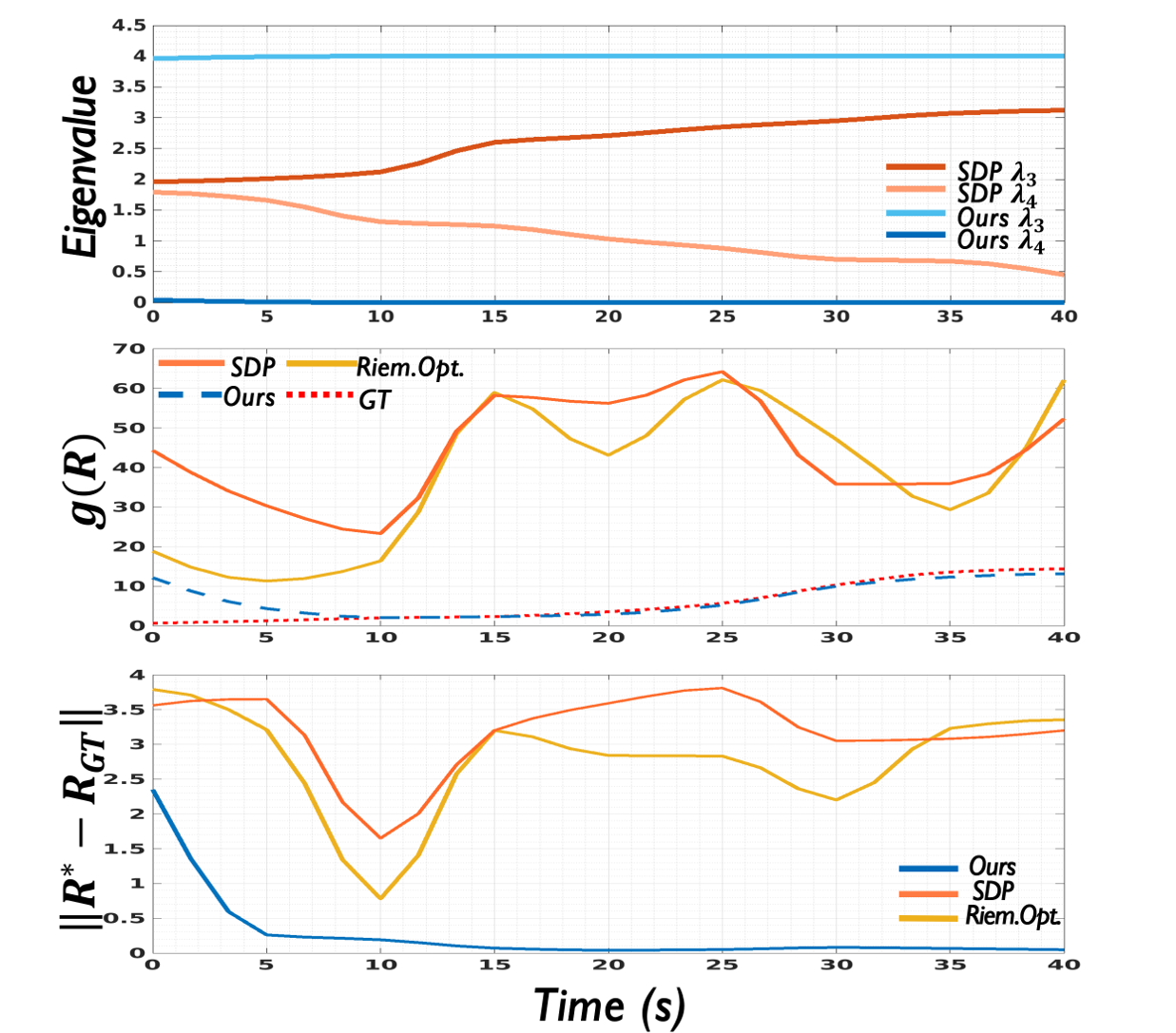}
		\vspace{-0.2cm}
		\caption{\label{fig:cost-time} Eigenvalues, costs and residuals of solutions in the four-robots test.}
		\vspace{-1.3cm}
	\end{figure}

	\section{Conclusion and future work}
	This paper proposes an accurate solver of mutual localization for multi-robot systems. Based on the partial observation graph, we use semidefinite relaxation and convex iterative optimization to provide a globally optimal consensus of reference frames for all robots. Extensive experiments on synthetic and real-world datasets show the outperforming accuracy of our method compared with local optimization-based methods. In the future, we will turn our attention to planning a suitable formation for swarm robots to meet the observability requirement of mutual localization.
	
	\bibliography{RAL2023_wyj}
\end{document}